\theoremstyle{plain}
\newtheorem{theorem}{Theorem}[section]
\newtheorem{lemma}[theorem]{Lemma}
\theoremstyle{definition}
\newtheorem{definition}[theorem]{Definition}
\newtheorem{assumption}[theorem]{Assumption}
\theoremstyle{remark}
\def\1{\bm{1}}
\def\vbeta{{\bm{\beta}}}
\def\vlambda{{\bm{\lambda}}}
\def\vrho{{\bm{\rho}}}
\def\mA{{\bm{A}}}
\def\mB{{\bm{B}}}
\def\mI{{\bm{I}}}
\def\mM{{\bm{M}}}
\def\mN{{\bm{N}}}
\def\mP{{\bm{P}}}
\def\mR{{\bm{R}}}
\def\mS{{\bm{S}}}
\def\mU{{\bm{U}}}
\def\mV{{\bm{V}}}
\DeclareMathAlphabet{\mathsfit}{\encodingdefault}{\sfdefault}{m}{sl}
\SetMathAlphabet{\mathsfit}{bold}{\encodingdefault}{\sfdefault}{bx}{n}
\def\gN{{\mathcal{N}}}
\def\gS{{\mathcal{S}}}
\def\gT{{\mathcal{T}}}
\def\gW{{\mathcal{W}}}
\def\sC{{\mathbb{C}}}
\def\sR{{\mathbb{R}}}
\def\sS{{\mathbb{S}}}
\newcommand{\E}{\mathbb{E}}
\newcommand{\spec}{\mathrm{Sp}}
\newcommand{\rank}{\mathrm{rank}}
\DeclareMathOperator*{\argmax}{arg\,max}
\DeclareMathOperator{\Tr}{tr}
\DeclareMathOperator{\asto}{\xrightarrow{\text{a.s.}}}
\DeclareMathOperator\supp{supp}
\pgfplotsset{compat=newest}
\pgfplotsset{width=7.5cm,compat=1.12}
\icmltitlerunning{Optimizing Orthogonalized Tensor Deflation via Random Tensor Theory}
\begin{document}

\onecolumn
\icmltitle{Optimizing Orthogonalized Tensor Deflation via Random Tensor Theory}




\begin{icmlauthorlist}
\icmlauthor{Mohamed El Amine Seddik}{tii}
\icmlauthor{Mohammed Mahfoud}{tum}
\icmlauthor{Merouane Debbah}{tii}
\end{icmlauthorlist}

\icmlaffiliation{tii}{Technology Innovation Institute, Abu Dhabi, United Arab Emirates}
\icmlaffiliation{tum}{Technical University of Munich, Munich, Germany}

\icmlcorrespondingauthor{M. Seddik}{mohamed.seddik@tii.ae}
\icmlcorrespondingauthor{M. Mahfoud}{mo.mahfoud@tum.de}
\icmlcorrespondingauthor{M. Debbah}{merouane.debbah@tii.ae}

\icmlkeywords{Machine Learning, ICML}

\vskip 0.3in



\printAffiliationsAndNotice{} 

\begin{abstract}
This paper tackles the problem of recovering a low-rank signal tensor with possibly correlated components from a random noisy tensor, or so-called \textit{spiked tensor model}. 
When the underlying components are orthogonal, they can be recovered efficiently using \textit{tensor deflation} which consists in successive rank-one approximations, while non-orthogonal components may alter the tensor deflation mechanism, thereby preventing efficient recovery. 
Relying on recently developed random tensor tools, this paper deals precisely with the non-orthogonal case by deriving an asymptotic analysis of a \textit{parameterized} deflation procedure performed on an order-three and rank-two spiked tensor. 
Based on this analysis, an efficient tensor deflation algorithm is proposed by optimizing the parameter introduced in the deflation mechanism, which in turn is proven to be optimal by construction for the studied tensor model.
The same ideas could be extended to more general low-rank tensor models, e.g., higher ranks and orders, leading to more efficient tensor methods with broader impact on machine learning and beyond.
\end{abstract}

\section{Introduction}
Tensor methods have been proven to be a powerful and versatile tool in machine learning in both providing a rich framework for theoretical analysis and motivating the reformulation of existing problems in higher dimensions \cite{rabanser2017introduction, sidiropoulos2017tensor}, which often results in the ability to develop better performing algorithms or massively accelerate existing ones \cite{fawzi2022discovering}. One of the most fundamental problems in machine learning is retrieving low-rank structures from high-dimensional data, in which tensor methods are particularly successful, e.g., learning Gaussian mixtures in an unsupervised setting \cite{anandkumar2014tensor}, which can be seen as a natural generalization of the standard principal component analysis (PCA) to higher order tensors \cite{zare2018extension}. Another surprising area where tensors methods have been shown to be efficient, is the reconstruction of training samples from a single gradient query of a neural network in a federated learning context \cite{wang2022reconstructing}.
\begin{figure}[t!]
    \centering
\begin{tikzpicture}

\definecolor{darkorange25512714}{RGB}{255,127,14}
\definecolor{darkslategray38}{RGB}{38,38,38}
\definecolor{lightgray204}{RGB}{204,204,204}
\definecolor{steelblue31119180}{RGB}{31,119,180}
\definecolor{mediumpurple148103189}{RGB}{90,40,130}

\begin{groupplot}[group style={group size=2 by 2}]
\nextgroupplot[
axis line style={lightgray204},
legend cell align={left},
legend style={fill opacity=0.8, draw opacity=1,at={(0.97,0.5)},anchor=east, text opacity=1, draw=lightgray204},
tick align=outside,
title={\(\displaystyle \langle x_1, x_2 \rangle = 0 \)},
x grid style={lightgray204},
ylabel={Alignments of $u_1$},
width=0.26\textwidth,
height=0.23\textwidth,
xmajorgrids,
xmajorticks=true,
xmin=0.0499999999999999, xmax=20.95,
xtick style={color=darkslategray38},
xtick={5,12,20},
xticklabels={5,\(\displaystyle \textcolor{mediumpurple148103189}{\beta_1=\beta_2}\),20},
y grid style={lightgray204},
ymajorgrids,
ymajorticks=true,
ymin=-0.05, ymax=1.05,
ytick style={color=darkslategray38},
ytick={0,1},
]
\addplot [thick, steelblue31119180, solid, line width=1.5pt]
table {%
1 1
1.5 1
2 1
2.5 1
3 1
3.5 1
4 1
4.5 1
5 1
5.5 1
6 1
6.5 1
7 1
7.5 1
8 1
8.5 1
9 1
9.5 1
10 1
10.5 1
11 1
11.5 1
12 1
12.5 0
13 0
13.5 0
14 0
14.5 0
15 0
15.5 5.08535733021103e-47
16 0
16.5 0
17 0
17.5 3.38083245094719e-32
18 0
18.5 0
19 0
19.5 0
20 0
};
\addlegendentry{$\langle u_1, x_1 \rangle$}
\addplot [thick, darkorange25512714, dashed, line width=1.5pt]
table {%
1 0
1.5 0
2 0
2.5 0
3 0
3.5 0
4 0
4.5 0
5 0
5.5 0
6 0
6.5 0
7 0
7.5 0
8 0
8.5 0
9 0
9.5 0
10 0
10.5 0
11 0
11.5 0
12 3.08148791101958e-33
12.5 1
13 1
13.5 1
14 1
14.5 1
15 1
15.5 1
16 1
16.5 1
17 1
17.5 1
18 1
18.5 1
19 1
19.5 1
20 1
};
\addlegendentry{$\langle u_1, x_2 \rangle$}

\nextgroupplot[
axis line style={lightgray204},
legend cell align={left},
legend style={
  fill opacity=0.8,
  draw opacity=1,
  text opacity=1,
  at={(0.03,0.03)},
  anchor=south west,
  draw=lightgray204
},
width=0.26\textwidth,
height=0.23\textwidth,
tick align=outside,
title={\(\displaystyle \langle x_1, x_2 \rangle = 0.5\)},
x grid style={lightgray204},
xmajorgrids,
xmajorticks=true,
xmin=0.0499999999999999, xmax=20.95,
xtick style={color=darkslategray38},
xtick={5,12,20},
xticklabels={5,\(\displaystyle \textcolor{mediumpurple148103189}{\beta_1=\beta_2}\),20},
y grid style={lightgray204},
ymajorgrids,
ymajorticks=true,
ymin=-0.05, ymax=1.05,
ytick={0,1},
ytick style={color=darkslategray38}
]
\addplot [thick, steelblue31119180, solid, line width=1.5pt]
table {%
1 0.999818784700321
1.5 0.999568662197534
2 0.999187284671912
2.5 0.99865149512165
3 0.997933623948189
3.5 0.997000497196941
4 0.995812215575987
4.5 0.994320659229601
5 0.992467675073596
5.5 0.990182918070884
6 0.987381358569642
6.5 0.983960558955651
7 0.979798005528202
7.5 0.974749119641012
8 0.968647149108203
8.5 0.961307021090352
9 0.952536337426445
9.5 0.942157482585339
10 0.930043884361225
10.5 0.916168513968288
11 0.900652412022085
11.5 0.883789686841853
12 0.866025403784439
12.5 0.847883796489824
13 0.829874584947
13.5 0.812417034652207
14 0.795804726948423
14.5 0.780208231673125
15 0.76569911102166
15.5 0.75227944854175
16 0.739907874394404
16.5 0.728519046088818
17 0.718036809824446
17.5 0.708382432650472
18 0.699479391099724
18.5 0.691255913687878
19 0.683646130326667
19.5 0.676590396259628
20 0.670035151449432
};
\addlegendentry{$\langle u_1, x_1 \rangle$}
\addplot [thick, darkorange25512714, dashed, line width=1.5pt]
table {%
1 0.516395699404195
1.5 0.525217914496983
2 0.534501773372701
2.5 0.544285657130014
3 0.554611772619471
3.5 0.565526410966527
4 0.577080115405605
4.5 0.589327677259771
5 0.602327831007252
5.5 0.616142446517875
6 0.630834908335394
6.5 0.646467216837467
7 0.66309513803339
7.5 0.680760483091316
8 0.699479391099724
8.5 0.719225529863861
9 0.739907874394404
9.5 0.761344926898817
10 0.783241645957285
10.5 0.805181419908093
11 0.826649105313091
11.5 0.847094599188356
12 0.866025403784439
12.5 0.883093072410352
13 0.898136143821766
13.5 0.911167843762586
14 0.922326931381656
14.5 0.931820218995885
15 0.939876173847404
15.5 0.946715506683948
16 0.952536337426445
16.5 0.957509048635774
17 0.961776527879346
17.5 0.965456908830671
18 0.968647149108203
18.5 0.971426603649485
19 0.97386022891799
19.5 0.976001303453363
20 0.977893669436873
};
\addlegendentry{$\langle u_1, x_2 \rangle$}

\nextgroupplot[
axis line style={lightgray204},
legend cell align={left},
legend style={fill opacity=0.8, at={(0.97,0.5)},
  anchor=east, draw opacity=1, text opacity=1, draw=lightgray204},
tick align=outside,
x grid style={lightgray204},
xlabel=\textcolor{darkslategray38}{\(\displaystyle \beta_{2}\)},
width=0.26\textwidth,
height=0.23\textwidth,
xmajorgrids,
xmajorticks=true,
xmin=0.0499999999999999, xmax=20.95,
xtick style={color=darkslategray38},
xtick={5,12,20},
xticklabels={5,\(\displaystyle \textcolor{mediumpurple148103189}{\beta_1=\beta_2}\),20},
y grid style={lightgray204},
ylabel={Alignments of $u_2$},
ymajorgrids,
ymajorticks=true,
ymin=-0.05, ymax=1.05,
ytick style={color=darkslategray38},
ytick={0,1},
]
\addplot [thick, steelblue31119180, solid, line width=1.5pt]
table {%
1 0
1.5 0
2 0
2.5 0
3 0
3.5 0
4 0
4.5 0
5 0
5.5 0
6 0
6.5 0
7 0
7.5 0
8 0
8.5 0
9 0
9.5 0
10 0
10.5 0
11 0
11.5 0
12 1
12.5 1
13 1
13.5 1
14 1
14.5 1
15 1
15.5 1
16 1
16.5 1
17 1
17.5 1
18 1
18.5 1
19 1
19.5 1
20 1
};
\addlegendentry{$\langle u_2, x_1 \rangle$}
\addplot [thick, darkorange25512714, dashed, line width=1.5pt]
table {%
1 1
1.5 1
2 1
2.5 1
3 1
3.5 1
4 1
4.5 1
5 1
5.5 1
6 1
6.5 1
7 1
7.5 1
8 1
8.5 1
9 1
9.5 1
10 1
10.5 1
11 1
11.5 1
12 0
12.5 0
13 3.64176793515635e-158
13.5 0
14 0
14.5 0
15 0
15.5 0
16 0
16.5 0
17 0
17.5 0
18 0
18.5 0
19 0
19.5 0
20 0
};
\addlegendentry{$\langle u_2, x_2 \rangle$}

\nextgroupplot[
axis line style={lightgray204},
legend cell align={left},
legend style={
  fill opacity=0.8,
  draw opacity=1,
  text opacity=1,
  at={(0.03,0.43)},
  anchor=south west,
  draw=lightgray204
},
width=0.28\textwidth,
height=0.23\textwidth,
tick align=outside,
x grid style={lightgray204},
xlabel=\textcolor{darkslategray38}{\(\displaystyle \beta_{2}\)},
xmajorgrids,
xmajorticks=true,
xmin=0.0499999999999999, xmax=20.95,
xtick style={color=darkslategray38},
xtick={5,12,20},
xticklabels={5,\(\displaystyle \textcolor{mediumpurple148103189}{\beta_1=\beta_2}\),20},
y grid style={lightgray204},
ymajorgrids,
ymajorticks=true,
ymin=-0.05, ymax=1.05,
ytick={0,1},
ytick style={color=darkslategray38}
]
\addplot [thick, steelblue31119180, solid, line width=1.5pt]
table {%
1 0.410551583621425
1.5 0.407810785644279
2 0.404882963760712
2.5 0.401748395066085
3 0.398384653038993
3.5 0.394766194999501
4 0.390863900858157
4.5 0.386644573110089
5 0.382070423350124
5.5 0.377098597861145
6 0.371680841624489
6.5 0.365763478136505
7 0.359288007255327
7.5 0.352192811653849
8 0.344416719097484
8.5 0.335905449924487
9 0.326622120259781
9.5 0.316562563474763
10 0.305774577698944
10.5 0.294376695404028
11 0.282567487363379
11.5 0.270614592744117
12 0.974556715862179
12.5 0.968896909078112
13 0.971564870434026
13.5 0.973921876776764
14 0.975981754584335
14.5 0.977771279155425
15 0.979322683600213
15.5 0.980668607247081
16 0.981839308217734
16.5 0.982861468133533
17 0.983757938642206
17.5 0.984547967241567
18 0.985247622256635
18.5 0.985870265883394
19 0.986427002723558
19.5 0.986927074597227
20 0.98737819458936
};
\addlegendentry{$\langle u_2, x_1 \rangle$}
\addplot [thick, darkorange25512714, dashed, line width=1.5pt]
table {%
1 0.994950122082031
1.5 0.994644143822839
2 0.994307846390339
2.5 0.993937038861754
3 0.993526797109343
3.5 0.993071314368777
4 0.992563720122432
4.5 0.991995862281316
5 0.991358048848157
5.5 0.990638748816537
6 0.989824260187753
6.5 0.988898369508416
7 0.987842058489139
7.5 0.986633368318159
8 0.985247622256635
8.5 0.983658337627297
9 0.981839308217734
9.5 0.979768413441046
10 0.977433485690361
10.5 0.974839706005891
11 0.972016378958443
11.5 0.969019360963577
12 0.29316651986874
12.5 0.270137734592152
13 0.280730546896313
13.5 0.29047395802653
14 0.299325270890626
14.5 0.307302341215576
15 0.314460568287094
15.5 0.320873948171141
16 0.326622120259781
16.5 0.33178268101319
17 0.336427244921442
17.5 0.340619853606553
18 0.344416719097484
18.5 0.347866652094701
19 0.351011791725911
19.5 0.353888424652343
20 0.356527784829788
};
\addlegendentry{$\langle u_2, x_2 \rangle$}
\end{groupplot}

\end{tikzpicture}
    \vspace{-.4cm}
    \caption{Orthogonalized deflation (see §\ref{sec_tensor_deflation} for details) applied on the rank-two tensor $\sum_{i=1}^2 \beta_i x_i^{\otimes 3}$ which yields the signal estimates $u_1$ and $u_2$, at first and second deflation steps respectively. The signals are successfully recovered in the orthogonal case $\langle x_1, x_2 \rangle=0$ (left), while the estimation is altered if the signal components are correlated $\langle x_1, x_2 \rangle=0.5$ (right). $\beta_1$ is fixed while varying $\beta_2$.} 
    \label{fig_illustration}
\end{figure}
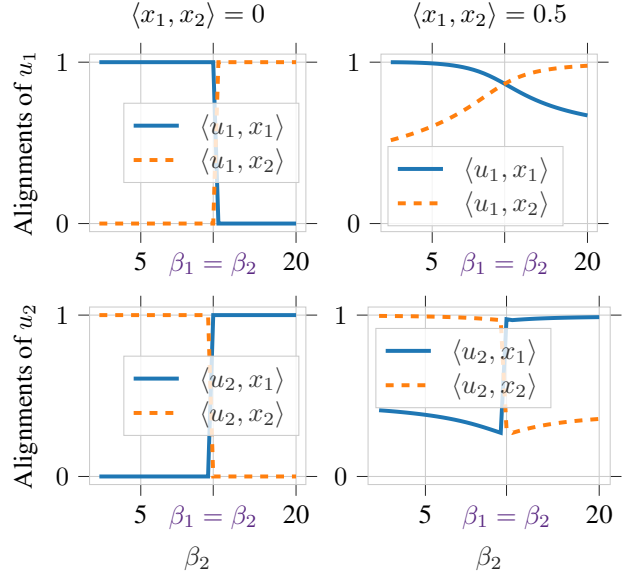

As a first step towards understanding the behavior of tensor methods, \citet{montanari2014statistical} introduced the concept of tensor PCA by studying the so-called \textit{spiked tensor model} of the form $\beta x^{\otimes d} + \gW/\sqrt p$ where $x\in \sR^p$ is a high-dimensional vector of unit norm which represents the (rank-one) signal of interest, $\gW$ is a symmetric Gaussian random noise tensor of order $d$, and $\beta \geq 0$ is a parameter controlling the signal strength. This statistical model raises many fundamental questions which mainly concern the theoretical and algorithmic guarantees that ensure the efficient recovery of the hidden signal $x$. 

A flurry of works focused on addressing these questions \cite{perry2016statistical, lesieur2017statistical, jagannath2020statistical, chen2021phase, goulart2021random, auddy2022estimating, arous2021long}. The first main result was for tensors of order $d\geq3$, for which it was shown that there exists a statistical threshold $\beta_{stat}$ of $O(1)$ in the tensor dimension, which defines the information-theoretic limit above which signal recovery is possible, with the maximum likelihood estimator (MLE), and below which signal recovery is impossible.


While recovery above $\beta_{stat}$ is theoretically possible from an information-theoretic standpoint, solving the underlying MLE problem remains NP-hard in the worst case \cite{hillar2013most}. Indeed, \citet{montanari2014statistical} suggests through heuristics that there exists an algorithmic threshold $\beta_{algo} = O(p^{\frac{d-2}{4}})$ beyond which recovery is possible with a polynomial time algorithm, therefore implying the existence of a theoretical-algorithmic spectral gap where no polynomial time algorithm has been proven to be efficient in signal recovery. These suggestions were rigorously proven by \cite{lesieur2017statistical, jagannath2020statistical, chen2021phase, huang2022power} and generalized to non-symmetric tensors by \cite{arous2021long, seddik2021random, auddy2022estimating}. 

From a practical standpoint, to be able to unleash the full potential of tensor methods, higher (beyond rank-one) low-rank signal reconstruction problems need to be considered, thus motivating the study of \textit{low-rank spiked tensor models}. In particular, and in a more realistic scenario, one would be interested in extracting low-rank hidden structures from random noise, for which the model can naturally be extended to $\sum_{i=1}^{r} \beta_i x_i^{\otimes d} + \gW/\sqrt p$ where $r$ is the rank of the signal of interest.
In this line of work, \citet{chen2021phase} proves that the asymptotic behavior of a low-rank spiked tensor model with orthogonal signal components, i.e., $\langle x_i, x_j \rangle = 0$ for $i\neq j \in [r]$, can be understood from the analysis of a rank-one model. Moreover, \citet{da2015rank, da2015iterative} show that estimating a higher rank signal boils down to performing successive rank-one approximations using iterative tensor deflation. While the latter result provides a more tractable approach to low-rank signal recovery in the orthogonal case, it may fail in signal reconstruction in the non-orthogonal case \cite{seddikaccuracy}. 

Other enhanced deflation techniques rely on orthogonal projections \cite{mackey2008deflation} while exhibiting the same alteration as the standard deflation in the non-orthogonal case, as illustrated in Figure \ref{fig_illustration}. The latter depicts signal recovery, in terms of alignments, from a rank-two tensor $\sum_{i=1}^2 \beta_i x_i^{\otimes 3}$ using an orthogonalized deflation \cite{mackey2008deflation}. It is clearly observed that, when $\beta_1 \approx \beta_2$, the non-orthogonality of $x_1$ and $x_2$ prevents efficient recovery. We highlight the fact that measuring alignments is a concrete performance measure of the recovery quality. Indeed, given that in high dimension, the probability that two random vectors $u$, $v$ are orthogonal, i.e., $\langle u, v\rangle = 0$, showcases the difficulty of obtaining high estimation accuracies in this setting.

\paragraph{Key contributions:} Aiming to understand the interplay of the orthogonalized tensor deflation and improve its efficiency, our key contributions can be summarized as follows:
\begin{enumerate}
    \item We consider a slightly different orhogonalized tensor deflation algorithm by introducing a parameter $\gamma$ as described in §\ref{sec_tensor_deflation}. In particular, $\gamma=1$ corresponds to the classical orhogonalized deflation \cite{mackey2008deflation}.
    \item We carry out a random tensor theory (RTT) analysis of the considered deflation method applied on a \textit{rank-two asymmetric spiked tensor model} defined in §\ref{sec_spiked_model}. Spiked models are more general and offer many theoretical advantages then considering noiseless low-rank models. For instance, subtracting a best rank-one approximation of a noiseless low-rank tensor may increase its rank \cite{stegeman2010subtracting}, while spiked models do not suffer from such limitation since their noise components is full rank \cite{strassen1983rank}.
    \item Based on our theoretical analysis, we optimize the parameter $\gamma$ introduced in the deflation mechanism which allows us to design a more efficient tensor deflation algorithm (see §\ref{sec_improved}).
\end{enumerate}

\section{Notations and Background}
The set $\{1, \ldots, n\}$ is denoted by $[n]$. The unit sphere in $\sR^p$ is denoted by $\sS^{p-1}$. The Dirac measure at some real value $x$ is denoted by $\delta_x$. The support of a measure $\mu$ is denoted by $\supp(\mu)$. The inner-produce between two vectors $u,v$ is denoted by $\langle u, v \rangle = \sum_i u_i v_i$. The imaginary part of a complex number $z$ is denoted by $\Im[z]$. The set of eigenvalues of a matrix $\mM$ is denoted by $\spec(\mM)$. The almost sure converges is denoted by the arrow $\asto$. The notation $a_n\asymp b_n$ means that $a_n$ and $b_n$ converge to the same limit as $n\to \infty$.

\subsection{Tensor Notations and Contractions}\label{sec_tensor_notations}
In this section, we provide the main tensor notations and definitions used throughout the paper, which we recommend to follow carefully for better understanding of our paper. 

\textbf{Three-order tensors:} The set of three-order tensors of size $p$ is denoted $\sR^{p\times p \times p}$. The scalar $T_{ijk}$ or $[\gT]_{ijk}$ denotes the $(i,j,k)$ entry of a tensor $\gT\in \sR^{p\times p \times p}$. In the remainder, we will mainly consider tensors from $\sR^{p\times p \times p}$, and for brevity, we may omit the notation $\gT \in \sR^{p\times p \times p}$.

\textbf{Rank-$r$ tensors:} A tensor $\gT$ is said to be of rank-one if it can be represented as the outer product of three real-valued vectors $x,y,z\in \sR^p$. In this case, we write $\gT = x\otimes y \otimes z$, where the outer product is defined such that $[x\otimes y \otimes z]_{ijk} = x_i y_j z_k$. More generally, a tensor $\gT$ is of rank-$r$, for some integer $r$, if it can be expressed as the sum of $r$ rank-one terms, written as $\gT = \sum_{i=1}^r x_i \otimes y_i \otimes z_i$, where $x_i, y_i, z_i\in \sR^p$ for all $i\in [r]$. To maintain consistency, we will adhere to the convention of using $x_i$ or $u_i$ to represent the components of the first mode, $y_i$ or $v_i$ to represent the components of the second mode, and $z_i$ or $w_i$ to represent the components of the third mode throughout the paper.

\textbf{Tensor contractions:} The first mode contraction of a tensor $\gT$ with a vector $x$ results in a matrix denoted $\gT(x,\cdot, \cdot)$ with entries $[\gT(x,\cdot, \cdot)]_{jk} = \sum_{i=1}^p x_i T_{ijk}$. Similarly, $\gT(\cdot, y, \cdot)$ and $\gT(\cdot, \cdot, z)$ denote the second and third mode contractions of $\gT$ with vectors $y$ and $z$ respectively. We will sometimes denote these contractions by $\gT(x)$, $\gT(y)$ and $\gT(z)$ if there is no ambiguity. 
The contraction of $\gT$ on two vectors $x,y$ is a vector denoted $\gT(x,y,\cdot)$ with entries $[\gT(x,y,\cdot)]_k = \sum_{ij} x_i y_j T_{ijk}$. The contraction of $\gT$ on three vectors $x,y,z$ is a scalar denoted $\gT(x,y,z) = \sum_{ijk} x_i y_j z_k T_{ijk}$. 
The first mode contraction of $\gT$ with a matrix $\mM\in \sR^{p\times p}$ results in a tensor denoted $\gT\times_1 \mM$ with entries $[\gT\times_1 \mM]_{ijk} = \sum_{i'=1}^p M_{ii'}T_{i'jk}$. Similarly, $\gT\times_2\mN$ and $\gT\times_3\mP$ denote the second and third modes tensor-matrix contractions of the tensor $\gT$ with the matrices $\mN$ and $\mP$ respectively. The notation $u\otimes \mM$ stands for the tensor with entries $u_i M_{jk}$.

\textbf{Tensor norms:} The Frobenius norm of a tensor $\gT$ is denoted $\Vert \gT\Vert_F$ with $\Vert \gT\Vert_F^2 = \sum_{ijk} T_{ijk}^2$. The spectral norm of $\gT$ is $\Vert \gT \Vert = \sup_{u,v,w\in \sS^{p-1}} \vert \gT(u,v,w) \vert $.

\textbf{Best rank-one approximation and tensor power iteration:} A best rank-one approximation of $\gT$ corresponds to a rank-one tensor $\lambda u\otimes v \otimes w$, where $\lambda>0$ and $u,v,w$ are unitary vectors, that minimizes the square loss $\Vert \gT - \lambda u\otimes v \otimes w\Vert_F^2$. The latter generalizes to tensors the concept of singular value and vectors \cite{lim2005singular} and the scalar $\lambda$ coincides with the spectral norm of $\gT$. In particular, the quadruple $(\lambda , u, v, w)$ satisfies the following identities 
\begin{equation}\label{eq_singular_value_vectors_identities}
    \gT(\cdot, v, w) = \lambda u, \quad \gT( u, \cdot, w) = \lambda v,\quad
    \gT( u,v, \cdot) =\lambda w, \quad \lambda = \gT(u, v, w).
\end{equation}

Such a best rank-one approximation can be computed via \textit{tensor power iteration} which consists in iterating
\begin{align*}
    u \leftarrow \frac{\gT(\cdot, v, w)}{ \Vert \gT(\cdot, v, w)\Vert }\quad
    v \leftarrow \frac{\gT(u, \cdot, w)}{ \Vert \gT(u, \cdot, w)\Vert }\quad
    w \leftarrow \frac{\gT(u, v, \cdot)}{ \Vert \gT(u, v, \cdot)\Vert }
\end{align*}
starting from some initialization \cite{anandkumar2014tensor}.

\subsection{Random Matrix Theory}
In this section, we provide some necessary tools from random matrix theory (RMT) which are at the core of our main results. Specifically, we will consider the \textit{resolvent} formalism \cite{hachem2007deterministic} which allows one to characterize the spectral behavior of large symmetric random matrices. Given a symmetric matrix $\mS\in \sR^{n\times n}$, the resolvent of $\mS$ is defined as $\mR(z) = \left( \mS - z \mI_n \right)^{-1}$ for $z\in \sC\setminus \spec(\mS)$.

In essence, RMT focuses on describing the distribution of eigenvalues of large random matrices. Typically, under certain technical assumptions on some random matrix $\mS\in \sR^{n\times n}$ with eigenvalues $\lambda_1, \ldots, \lambda_n$, the \textit{empirical spectral measure} of $\mS$, defined as $\hat \mu = \frac1n \sum_{i=1}^n \delta_{\lambda_i}$, converges in the weak sense \cite{van1996weak} to some deterministic probability measure $\mu$ as $n\to \infty$ and RMT aims at describing such $\mu$. To this end, one of the widely considered approaches relies on the \textit{Stieltjes transform} \cite{tao2012topics}. Given a probability measure $\mu$, the Stieltjes transform of $\mu$ is defined as $g_\mu(z) = \int \frac{d\mu(\lambda)}{\lambda - z}$ with $z\in \sC\setminus \supp(\mu)$, and the inverse formula allows one to describe the density function of $\mu$ as $\mu(dx) = \frac{1}{\pi} \lim_{\varepsilon\to 0} \Im[g_\mu(x+i\varepsilon)]$. 

The Stieltjes transform of the empirical spectral measure, $\hat\mu$, is closely related to the resolvent of $\mS$ through the normalized trace operator. In fact, $g_{\hat\mu}(z) = \frac1n \Tr \mR(z)$ and the \textit{almost sure} convergence of $g_{\hat\mu}(z)$ to some deterministic Stieltjes transform $g(z)$ is equivalent to the weak convergence between the underlying probability measures \cite{tao2012topics}. Our analysis relies on estimating quantities involving $\frac1n \Tr \mR(z)$, making the use of the resolvent approach a natural choice.

\section{Model \& Main Results}
\subsection{Rank-two Spiked Tensor Model}\label{sec_spiked_model}
We consider the following rank-two spiked tensor model
\begin{align}\label{eq_first_deflation_model}
    \gT_1 \equiv \gS + \frac{1}{\sqrt n} \gW \in \sR^{p\times p \times p},
\end{align}
where $\gS = \sum_{i=1}^2 \beta_i x_i \otimes y_i \otimes z_i$, $\beta_i\geq 0$ correspond to the signal-to-noise ratios (SNRs), $x_i, y_i, z_i \in \sS^{p-1}$ are the signal components, $\gW$ is a random tensor with standard Gaussian i.i.d.\@ entries, i.e., $W_{ijk} \sim \gN(0, 1)$, and $n = 3p$.

We further consider that the between signal components alignments are uniform across the modes, i.e.
\begin{align}\label{assum_alpha}
    \alpha \equiv \langle x_1, x_2 \rangle = \langle y_1, y_2 \rangle = \langle z_1, z_2 \rangle.
\end{align}
The parameter $\alpha$ will therefore control the correlation between the rank-two terms. Specifically, $\alpha=0$ corresponds to the orthogonal case while $\alpha > 0$ models the correlated case. In the following we denote $\alpha_{ij} = \alpha$ if $i\neq j$ and $1$ otherwise. Our results can be easily extended to higher ranks and high order tensors but we consider the above model and restricted assumption in \eqref{assum_alpha} for the sake of simplicity.

\subsection{RTT Analysis of Orthogonalized Tensor Deflation}\label{sec_tensor_deflation}
In order to recover the signal components, we first consider a best rank-one approximation $\hat\lambda_1 \hat u_1 \otimes \hat v_1 \otimes\hat w_1$ of the tensor $\gT_1$ as a \textit{first deflation step}.
Given the vector $\hat u_1\in \sS^{p-1}$, the \textit{second deflation step} consists in performing a best rank-one approximation $\hat \lambda_2 \hat u_2 \otimes \hat v_2 \otimes\hat w_2$ of the following tensor
\begin{align}\label{eq_model_second_deflation}
    \gT_2 \equiv \gT_1 \times_1 \left( \mI_p - \gamma \hat u_1 \hat u_1^\top \right) = \gT_1 - \gamma \hat u_1 \otimes \gT_1(\hat u_1),
\end{align}
for some parameter $\gamma \in [0, 1]$ which we will optimize based on our theoretical analysis. In particular, when $\gamma=1$, the tensor $\gT_2$ is obtained as the orthogonal projection of the first mode of the tensor $\gT_1$ on the hyperplane defined by the plane normal vector $\hat u_1$, which corresponds to the classical \textit{orthogonalized deflation} \cite{mackey2008deflation}.

Moreover, as we recalled in \eqref{eq_singular_value_vectors_identities}, the best rank-one approximations $\hat\lambda_1\hat u_1 \otimes\hat v_1 \otimes\hat w_1$ and $\hat\lambda_2\hat u_2 \otimes\hat v_2 \otimes\hat w_2$ of $\gT_1$ and $\gT_2$ respectively satisfy the following identities, for $i\in [2]$
\begin{equation}\label{eq_singular_values_vectors}
    \gT_i(\cdot,\hat v_i,\hat w_i) =\hat \lambda_i\hat u_i, \quad \gT_i(\hat u_i, \cdot,\hat w_i) =\hat \lambda_i\hat v_i,\quad
    \gT_i(\hat u_i,\hat v_i, \cdot) =\hat \lambda_i\hat w_i, \quad \hat\lambda_i = \gT_i(\hat u_i,\hat v_i,\hat w_i).
\end{equation}
In the remainder, we compute $\hat\lambda_1\hat u_1 \otimes\hat v_1 \otimes\hat w_1$ and $\hat\lambda_2\hat u_2 \otimes\hat v_2 \otimes\hat w_2$ in all our simulations via tensor power iteration initialized with tensor SVD \cite{auddy2022estimating}, which has been proven to converge in polynomial time for $\beta_i\geq O(p^{3/2})$ in the orthogonal case $\alpha=0$. Moreover, for each $i\in [2]$, denote the following alignments as
\begin{equation}
    \begin{split}
        \hat\rho_{1i} &\equiv \vert \langle \hat u_1, x_i \rangle \vert \asymp \vert \langle \hat v_1, y_i \rangle \vert \asymp \vert \langle \hat w_1, z_i \rangle \vert,\\
    \hat\theta_{2i} &\equiv \vert \langle \hat u_2, x_i \rangle\vert, \quad \hat\rho_{2i} \equiv \vert \langle \hat v_2, y_i \rangle\vert \asymp \vert \langle \hat w_2, z_i \rangle\vert,\\
    \hat \kappa &\equiv \vert \langle \hat u_1, \hat u_2 \rangle\vert,\quad \hat \eta \equiv \vert \langle \hat v_1, \hat v_2 \rangle\vert \asymp \vert \langle \hat w_1, \hat w_2 \rangle\vert.
    \end{split}
\end{equation}
The equivalences $\vert \langle \hat u_1, x_i \rangle \vert \asymp \vert \langle \hat v_1, y_i \rangle \vert \asymp \vert \langle \hat w_1, z_i \rangle \vert$, $\vert \langle \hat v_2, y_i \rangle\vert \asymp  \vert \langle \hat w_2, z_i \rangle\vert$ and $\vert \langle \hat v_1, \hat v_2 \rangle\vert \asymp \vert \langle \hat w_1, \hat w_2 \rangle\vert$ are a consequence of our assumption in \eqref{assum_alpha} and since all the mode dimensions of $\gT_1$ are equal. Moreover, $ \hat \theta_{2i} \,\cancel{\asymp} \,\hat \rho_{2i} $ and $\hat \kappa\, \cancel{\asymp} \,\hat \eta$ since the projection in \eqref{eq_model_second_deflation} is applied only on the first mode.

In order to decipher the asymptotic behavior of the considered deflation method as $n\to \infty$, our main goal is to compute the asymptotic expected values of the singular values $\hat \lambda_i$ and the alignments $\hat \rho_{1i}, \hat \theta_{2i}, \hat \rho_{2i} , \hat \kappa,\hat \eta$. Indeed, using concentration arguments one can show that these quantities tend to concentrate around their expected values as $n$ grows large with variances of order $O(n^{-1})$, in the same vein as \cite{benaych2011fluctuations} which studied the fluctuations of the largest eigenvalues of large random matrices.

Moreover, we also address the problem of estimating the underlying model parameters, namely the signal-to-noise ratios $\beta_1, \beta_2$ and the correlation parameter $\alpha$ based on a single realization of $\gT_1$. This allows us to design an improved deflation algorithm in the correlated case. Our analysis relies on a recently developed random tensor theory approach from \cite{seddik2021random}. In particular, we analyze the random tensor model obtained at each deflation step by: \textbf{1)} Identifying a corresponding random matrix; \textbf{2)} Describing the limiting spectral measure of the latter and \textbf{3)} Computing the asymptotic singular value and corresponding alignments. 

\subsubsection{First Deflation Step}
We start by analyzing the random tensor model of the first deflation step, namely the tensor $\gT_1$ in \eqref{eq_first_deflation_model}.

\textbf{Corresponding Random Matrix Model:} Starting from the identities in \eqref{eq_singular_values_vectors} for $i=1$, it has been shown in \cite{seddik2021random} that the study of the random tensor $\gT_1$ and its associated singular value and vectors $(\hat\lambda_1,\hat u_1,\hat v_1,\hat w_1)$ boils down to the analysis of the following block-wise contraction random matrix of size $n\times n$ (see Appendix \ref{proof_first_deflation})
\begin{align}
    \mN \equiv \frac{1}{\sqrt n} \begin{pmatrix}
    0 & \gW(\hat w_1) & \gW(\hat v_1)\\
    \gW(\hat w_1)^\top & 0 & \gW(\hat u_1) \\
    \gW(\hat v_1)^\top & \gW(\hat u_1)^\top  & 0
    \end{pmatrix}.
\end{align}

\textbf{Limiting Spectral Measure:} In fact, the characterization of the limits of $\lambda_1$ and the alignments $\hat \rho_{1i}$ for $i\in [2]$ when $n\to \infty$ boils down to the computation of the Stieltjes transform of the limiting spectral measure of the random matrix $\mN$, see Appendix \ref{proof_singular_value_first_deflation} for details. We henceforth need the following technical assumptions to characterize such Stieltjes transform.

\begin{assumption}\label{assump_existence_first_deflation} As $n\to \infty$, there exists a sequence of critical points $(\hat\lambda_1,\hat u_1,\hat v_1,\hat w_1)$ such that $\hat\lambda_1\asto \lambda_1 > 2 \sqrt{ \frac{2}{3} } $ and $\hat \rho_{1i} \asto \rho_{1i}  > 0$.
\end{assumption}

Under Assumption \ref{assump_existence_first_deflation}, we have the following result from \citep[Corollary 1]{seddik2021random} which characterizes the limiting spectral measure of the random matrix $\mN$.

\begin{theorem}\label{thm_semi_circle} Under Assumption \ref{assump_existence_first_deflation}, the spectral measure of $\mN$ converges weakly to a semi-circle law $\mu$ of compact support $\left[-2 \sqrt{ \frac{2}{3} }, 2 \sqrt{ \frac{2}{3} }  \right] $ and density function $\mu(dx) = \frac{3}{4\pi} \sqrt{ \left( x^2 - \frac83 \right)^+ }$. Moreover, the Stieltjes transform of $\mu$ is
\begin{align*}
    r(z) = \frac34 \left( -z + \sqrt{z^2 - \frac83} \right), \quad \text{for}\quad z > 2 \sqrt{ \frac{2}{3}}.
\end{align*}
\end{theorem}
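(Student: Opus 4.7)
The plan is to analyze the limiting spectral distribution of $\mN$ via the Stieltjes transform method: it suffices to show that $\tfrac{1}{n}\Tr(\mN - zI)^{-1}$ converges almost surely to the prescribed $r(z)$ for $z$ outside the eventual support, after which the support and density are extracted from the Stieltjes inversion formula.

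The first step I would take is to decouple $\mN$ from the randomness of $\hat u_1, \hat v_1, \hat w_1$. Under Assumption \ref{assump_existence_first_deflation} these unit vectors admit almost-sure limits and enter $\mN$ only through the three mode contractions. Replacing them by deterministic limits changes each block by a Gaussian matrix with vanishing operator norm, and by the standard low-rank / small-perturbation stability of the Stieltjes transform such a perturbation cannot move the bulk. I would therefore carry out the bulk spectral analysis on the surrogate matrix in which the contraction directions are frozen, while leaving any outlier eigenvalues for the separate perturbation analysis that follows in later sections.

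Next I would derive the self-consistent equation for $r(z)$. After freezing, the off-diagonal blocks of $\sqrt n\,\mN$ are Gaussian matrices with entrywise variance $1$ while the diagonal blocks vanish, giving a Wigner-type matrix with block variance profile $\sigma_{ij}^2 = \mathbf{1}\{i,j \text{ lie in distinct blocks}\}$. The row and column sums $\tfrac{1}{n}\sum_j \sigma_{ij}^2$ all equal $\tfrac{2}{3}$, so the profile is doubly stochastic. Applying Gaussian integration by parts (Stein's lemma) to $\mathbb{E}[\mR(z)\mN]$ entry-wise and upgrading to almost-sure convergence via the Poincaré concentration inequality yields a block Dyson equation whose three per-block Stieltjes transforms coincide by the symmetry of the profile; their common value $r(z)$ satisfies
\begin{equation*}
    \tfrac{2}{3}\,r(z)^2 + z\,r(z) + 1 = 0,
\end{equation*}
which is exactly the defining quadratic of a semi-circle Stieltjes transform of variance $\tfrac{2}{3}$.

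Solving and selecting the branch with $r(z)\sim -1/z$ at infinity gives $r(z) = \tfrac{3}{4}(-z + \sqrt{z^2 - 8/3})$, after which the Stieltjes inversion formula yields both the support $[-2\sqrt{2/3}, 2\sqrt{2/3}]$ and the semi-circle density. The hard part will be the decoupling in step one: the vectors $\hat u_1, \hat v_1, \hat w_1$ are highly nonlinear functionals of $\gW$, and the blocks $\gW(\hat u_1), \gW(\hat v_1), \gW(\hat w_1)$ inherit non-trivial correlations from sharing the same underlying noise. Making the replacement rigorous requires either an interpolation argument transferring the spectrum from the frozen model to the original one, or a conditioning scheme that absorbs the low-rank nonlinear dependence into a correction whose contribution to $\tfrac{1}{n}\Tr\mR(z)$ vanishes in the limit. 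Verifying that this correction only feeds into the outlier level and does not alter the bulk is the subtle point driving the analysis.
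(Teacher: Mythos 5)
The paper does not supply its own proof of Theorem~\ref{thm_semi_circle}: it imports the statement directly from Corollary~1 of \cite{seddik2021random}, and in the appendix it likewise quotes the limiting normalized traces of the block resolvents of $\mN$ from that reference. Your computational core---writing the block Dyson equation for the three per-block normalized traces, exploiting the doubly stochastic $3\times3$ variance profile with off-diagonal block row sum $\tfrac23$, obtaining the quadratic $\tfrac23\,r(z)^2+z\,r(z)+1=0$, and recovering the support and density by Stieltjes inversion---matches the mechanism of the cited proof, and the quadratic, branch choice, and endpoints are all correct.

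There is, however, a genuine gap in your opening decoupling step. You justify replacing $\hat u_1,\hat v_1,\hat w_1$ by deterministic limits on the grounds that ``under Assumption~\ref{assump_existence_first_deflation} these unit vectors admit almost-sure limits.'' That is not what Assumption~\ref{assump_existence_first_deflation} says: it asserts almost-sure convergence of the singular value $\hat\lambda_1$ and of the \emph{alignments} $\hat\rho_{1i}=|\langle\hat u_1,x_i\rangle|$, not of the vectors $\hat u_1,\hat v_1,\hat w_1$ themselves. In high dimension these vectors remain genuinely random (concentrated on a sphere of fixed latitude around the signal directions, not on a point), so there is no deterministic $u_1$ for which ``$\gW(\hat u_1)-\gW(u_1)$ has vanishing operator norm'' is meaningful, and the frozen surrogate model you wish to analyze is not well defined. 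The route actually taken in \cite{seddik2021random}---and mirrored in Appendix~\ref{proof_limiting_measure_second_deflation} for the second deflation step---avoids any freezing: one differentiates the critical-point identities \eqref{eq_id_1} implicitly in $W_{ijk}$ to get \eqref{eq_deriv}, applies Stein's lemma to the entries of $\mR(z)\mN$ with the chain rule carried through the $\gW$-dependent $\hat u_1,\hat v_1,\hat w_1$, and observes that the extra derivative terms contribute only $O(n^{-1})$ to $\tfrac1n\Tr\mR(z)$, so that the leading terms close exactly the quadratic you wrote. A defensible version of your ``freeze'' would be to establish universality of the bulk over \emph{any} fixed, $\gW$-independent unit contraction directions and then interpolate to the random ones; but that interpolation is precisely what your sketch flags as ``the subtle point'' and leaves unresolved, and it is the reason the reference works directly with the dependent vectors instead.
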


\begin{figure}[t!]
    \centering
    \input{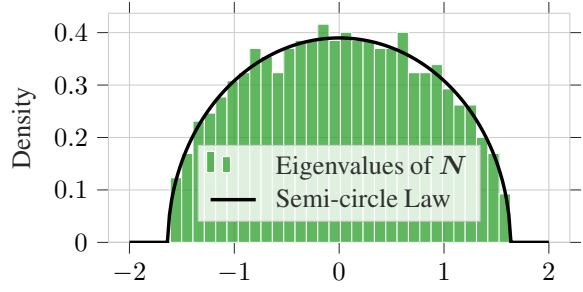}
    \vspace{-.5cm}
    \caption{Histogram of the eigenvalues of $\mN$ and limiting semi-circle law from Theorem \ref{thm_semi_circle}. We considered $p=200$, $\beta_1=20$, $\beta_2=15$, $\alpha=0.8$ and one realization of $\gT_1$.}
    \label{fig_semi_circle}
\end{figure}

Figure \ref{fig_semi_circle} depicts the histogram of the eigenvalues of $\mN$ and the corresponding limiting semi-circle law as per Theorem \ref{thm_semi_circle}. Note that the spectral measure of $\mN$ is not affected by the parameters $\beta_1, \beta_2$ and $\alpha$ but some conditions are required on the latest to ensure Assumption \ref{assump_existence_first_deflation} as we will see subsequently. We also refer the reader to \cite{goulart2021random, seddik2021random} for a more detailed discussion on Assumption \ref{assump_existence_first_deflation} in the rank-one case.

\textbf{Asymptotic Singular Value and Alignments:} We now consider the computation of $\lambda_1$ to give an insight about Assumption \ref{assump_existence_first_deflation}. Given Theorem \ref{thm_semi_circle}, one can derive by taking the expectation w.r.t.\@ $\gW$ of the identity $ \hat\lambda_1 = \gT_1(\hat u_1, \hat v_1, \hat w_1) $ in \eqref{eq_singular_values_vectors}, that the limiting singular value $\lambda_1$ satisfies the following equation (see Appendix \ref{proof_singular_value_first_deflation} for the derivations)
\begin{align}
    \lambda_1 + r(\lambda_1) = \sum_{i=1}^2 \beta_i \rho_{1i}^3.
\end{align}
Therefore, since the Stieltjes transform $r$ has to be evaluated in $\lambda_1$, the latter must lie outside the support of $\mu$ which is ensured by Assumption \ref{assump_existence_first_deflation} if the signal strengths $\beta_1$ or $\beta_2$ are sufficiently high. In the case $\alpha=0$, it was shown in \citep[Corollary 3]{seddik2021random} that $\max\{\beta_1, \beta_2\}$ must be greater than $\frac{2\sqrt 3}{3}$ to ensure $\lambda_1 > 2 \sqrt{ \frac{2}{3}}$. Besides, note that when $\lambda_1 \leq 2 \sqrt{ \frac{2}{3}}$, i.e., $\lambda_1$ lies inside the support of $\mu$, it basically corresponds to the case where the tensor $\gT_1$ is indistinguishable from its noise counterpart $\gW$, and hence recovering the signal components is information-theoretically impossible \cite{montanari2014statistical,lesieur2017statistical, jagannath2020statistical, goulart2021random, seddik2021random}. 

Taking the expectation w.r.t.\@ $\gW$ of the remaining identities in \eqref{eq_singular_values_vectors} for $i=1$, projected on the signal components $x_i, y_i, z_i$ for $i\in [2]$ as detailed in Appendix \ref{proof_alignments_first_deflation}, allows us to obtain the following result which characterizes the asymptotic behavior of the first deflation step. 

\begin{theorem}\label{thm_first_deflation_step}
    Under Assumption \ref{assump_existence_first_deflation}, the limiting singular value $\lambda_1$ and corresponding alignments $\rho_{1i}$ for $i\in [2]$ of the first deflation step satisfy the following system of equations
    \begin{align}\label{eq_system_first_deflation}
        \begin{cases}
        f_r(\lambda_1) = \sum_{i=1}^2 \beta_i \rho_{1i}^3,\\
        h_r(\lambda_1) \rho_{1j} = \sum_{i=1}^2 \beta_i \alpha_{ij} \rho_{1i}^2\quad \text{for}\quad j \in [2],
    \end{cases}
    \end{align}
    where we recall $\alpha_{ij} = \alpha$ if $i\neq j$ and $1$ otherwise, and we denoted $f_r(z) = z + r(z)$ and $h_r(z) = - \frac{1}{r(z)}$.
\end{theorem}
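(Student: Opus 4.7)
The plan is to derive each line of \eqref{eq_system_first_deflation} by taking the expectation with respect to the Gaussian noise $\gW$ of the stationarity identities in \eqref{eq_singular_values_vectors} (for $i=1$), separating a deterministic signal contribution from a noise contribution, and handling the latter via Gaussian integration by parts (Stein's lemma). After these manipulations, the resolvent formalism enters: the normalized trace of $(\mN - \hat\lambda_1\mI_n)^{-1}$ appears and, by Theorem \ref{thm_semi_circle}, converges to the Stieltjes transform $r(\lambda_1)$. A concentration argument in the style of \cite{benaych2011fluctuations,seddik2021random} finally turns these expectation identities into the claimed almost-sure limits.

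For the singular-value equation, I would start from $\hat\lambda_1 = \gT_1(\hat u_1,\hat v_1,\hat w_1) = \sum_i \beta_i\langle x_i,\hat u_1\rangle\langle y_i,\hat v_1\rangle\langle z_i,\hat w_1\rangle + \tfrac{1}{\sqrt n}\gW(\hat u_1,\hat v_1,\hat w_1)$. The signal part converges to $\sum_i\beta_i\rho_{1i}^3$ by the mode-equivalence consequence of \eqref{assum_alpha}. For the noise part, Stein's lemma $\E[W_{abc}F(\gW)] = \E[\partial_{W_{abc}}F(\gW)]$ is applied entry-wise, reducing the computation to the partial derivatives of $\hat u_1,\hat v_1,\hat w_1$ in $W_{abc}$. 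Implicit differentiation of \eqref{eq_singular_values_vectors} for $i=1$, together with the constraints $\hat u_1^\top\partial\hat u_1 = \hat v_1^\top\partial\hat v_1 = \hat w_1^\top\partial\hat w_1 = 0$ coming from the unit-norm conditions, expresses these derivatives through a block of $(\mN - \hat\lambda_1\mI_n)^{-1}$; summing over $(a,b,c)$ collapses the expression to $\tfrac{1}{n}\Tr(\mN - \hat\lambda_1\mI_n)^{-1}$, which tends to $r(\lambda_1)$ by Theorem \ref{thm_semi_circle}. This yields $\lambda_1 + r(\lambda_1) = \sum_i\beta_i\rho_{1i}^3$, i.e.\ $f_r(\lambda_1) = \sum_i\beta_i\rho_{1i}^3$.

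For the alignment equations, I would project the vectorial identity $\gT_1(\cdot,\hat v_1,\hat w_1) = \hat\lambda_1\hat u_1$ onto the deterministic vector $x_j$ (the analogous projections onto $y_j$ and $z_j$ coincide in the limit by \eqref{assum_alpha} and the equality of mode dimensions), giving $\sum_i\beta_i\langle x_j,x_i\rangle\langle y_i,\hat v_1\rangle\langle z_i,\hat w_1\rangle + \tfrac{1}{\sqrt n}\gW(x_j,\hat v_1,\hat w_1) = \hat\lambda_1\hat\rho_{1j}$. The signal term limits to $\sum_i\beta_i\alpha_{ij}\rho_{1i}^2$. Since $x_j$ is independent of $\gW$, Stein's lemma on the noise term only unfolds the dependence of $\hat v_1,\hat w_1$ on $\gW$; the same implicit-differentiation bookkeeping produces a correction that, combined with $\hat\lambda_1\hat\rho_{1j}$, assembles into an overall coefficient $\lambda_1 + \tfrac{2}{3}r(\lambda_1)$ multiplying $\rho_{1j}$. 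Using the quadratic $\tfrac{2}{3}r(z)^2 + z\,r(z) + 1 = 0$ satisfied by the semicircle Stieltjes transform from Theorem \ref{thm_semi_circle}, which rearranges as $z + \tfrac{2}{3}r(z) = -1/r(z)$, this coefficient collapses to $h_r(\lambda_1) = -1/r(\lambda_1)$, delivering the second line of \eqref{eq_system_first_deflation}.

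The main technical obstacle is the clean execution of the Gaussian integration by parts on the implicitly defined critical point $(\hat\lambda_1,\hat u_1,\hat v_1,\hat w_1)$: differentiating \eqref{eq_singular_values_vectors} implicitly in each entry $W_{abc}$ and recognizing the resulting Jacobian as a block of $\mN - \hat\lambda_1\mI_n$ requires careful mode bookkeeping, and is the step where the coefficient $\tfrac{2}{3}$ in front of $r(\lambda_1)$ in the alignment identity is produced (rather than a plain $1$). A secondary obstacle is the concentration of $\hat\lambda_1$ and of the various alignments around their expected values, which should follow from the Gaussian Poincar\'e inequality applied to functionals of $\gW$ that are $O(1/\sqrt n)$-Lipschitz in the noise entries, together with Assumption \ref{assump_existence_first_deflation} guaranteeing that $\hat\lambda_1$ stays bounded away from $\supp(\mu)$.
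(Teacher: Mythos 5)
Your proposal mirrors the paper's argument in Appendices~\ref{proof_singular_value_first_deflation} and~\ref{proof_alignments_first_deflation}: take expectations of the stationarity identities in \eqref{eq_singular_values_vectors}, apply Stein's lemma, pass derivatives of the critical point through the resolvent of $\mN$ as in \eqref{eq_deriv}, identify the resulting normalized traces with $r(\lambda_1)$ and its block components $r_i = r/3$, and close the argument with concentration. You also correctly locate the source of the $\tfrac{2}{3}$ factor (only the $\hat v_1,\hat w_1$ blocks contribute once $x_j$ is held fixed) and the quadratic identity $\tfrac{2}{3}r^2 + zr + 1 = 0$ that turns $\lambda_1 + \tfrac{2}{3}r(\lambda_1)$ into $h_r(\lambda_1) = -1/r(\lambda_1)$; this is exactly the paper's route.
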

\vspace{-.5cm}
\begin{proof}
    See Appendices \ref{proof_singular_value_first_deflation} and \ref{proof_alignments_first_deflation}.
\end{proof}

\begin{figure}[t!]
    \centering
    \input{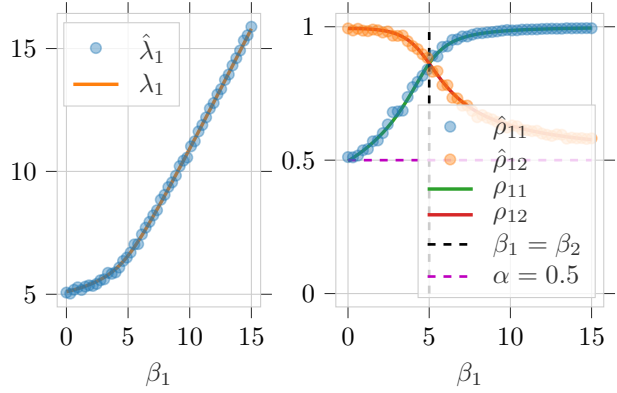}
    \vspace{-.4cm}
    \caption{Simulated versus asymptotic singular value and alignments corresponding to the first deflation step as per Theorem \ref{thm_first_deflation_step}. We considered $\beta_1 = 5$, $\alpha = 0.5$, $p=100$ and varying $\beta_1\in [0, 15]$. The system of equations in \eqref{eq_system_first_deflation} is solved numerically and initialized with the simulated singular value and alignments (dotted curves) from one realization of $\gT_1$.}
    \label{fig_first_deflation}
\end{figure}

Figure \ref{fig_first_deflation} shows the simulated versus asymptotic singular value and alignments of the first deflation step as stated by Theorem \ref{thm_first_deflation_step}. Specifically, the asymptotic behavior of the first deflation step is described by a set of three polynomial equations involving $\lambda_1$ and $\rho_{1i}$ for $i\in [2]$. The existence and uniqueness of such solutions is not addressed in our present analysis and we stress out that the asymptotic curves in Figure \ref{fig_first_deflation} are obtained by solving numerically the system \eqref{eq_system_first_deflation} initialized with the simulated singular value and alignments from one realization of the random tensor $\gT_1$.

\subsubsection{Second Deflation Step}\label{sec_second_deflation}
We henceforth turn into the description of the second deflation step asymptotics.

\textbf{Corresponding Random Matrix Model:} Denote $\hat u_3 = \hat u_2 - \gamma \langle \hat u_1, \hat u_2 \rangle \hat u_1 $. We show in Appendix \ref{proof_second_deflation} that the study of the second deflation step boils down to the analysis of the following $n\times n$ block-wise contraction random matrix
 \begin{align}\label{eq_second_contraction_matrix}
    \mM \equiv \frac{1}{\sqrt n} \begin{pmatrix}
    0 & \gW(\hat w_2) & \gW(\hat v_2)\\
    \gW(\hat w_2)^\top & 0 & \gW(\hat u_3 ) \\
    \gW(\hat v_2)^\top & \gW(\hat u_3 )^\top   & 0
    \end{pmatrix},
\end{align}

\textbf{Limiting Spectral Measure:} We demonstrate that for some $\gamma \neq 1$, the limiting spectral measure of $\mM$ does not follow a semi-circle law due to the additional term $\gamma \langle\hat u_1, \hat u_2 \rangle \gW(\hat u_1)$ induced by the correlation between the singular vectors $\hat u_1$ and $\hat u_2$. Besides, when $\gamma=0$ or $\gamma=1$, the term $\gamma \langle\hat u_1, \hat u_2 \rangle \gW(\hat u_1)$ vanishes in which cases the limiting spectral measure of $\mM$ is again described by the semi-circle law in Theorem \ref{thm_semi_circle}. In fact, when $\gamma=1$, we have from the identity $\gT_2(\cdot, \hat v_2, \hat w_2) = \hat \lambda_2 \hat u_2$ in \eqref{eq_singular_values_vectors} and given $\gT_2$ in \eqref{eq_model_second_deflation}
\begin{equation}\label{eq_kappa_zero}
    \begin{split}
        &\lambda_2 \langle \hat u_1, \hat u_2 \rangle = \gT_2( \hat u_1, \hat v_2, \hat w_2 ) = \gT_1( \hat u_1, \hat v_2, \hat w_2 ) -  \underbrace{\langle \hat u_1, \hat u_1 \rangle}_{=1} \gT_1( \hat u_1, \hat v_2, \hat w_2 ) = 0,
    \end{split}
\end{equation}
which implies $\langle \hat u_1, \hat u_2 \rangle=0$ since the spectral norm of the tensor $\gT_2$ is not null, due to the presence of the noise term.

We therefore provide the result characterizing the limiting spectral measure of $\mM$ for any $\gamma\in [0, 1]$, and which in turn generalizes Theorem \ref{thm_semi_circle} to random contraction matrices of the form in \eqref{eq_second_contraction_matrix}. We start by the following definition.

\begin{definition}\label{def_limiting_measure} Let $\nu$ be the probability measure with Stieltjes transform $q(z) = a(z) + 2 b(z) $ verifying $\Im[q(z)]>0$ for $\Im[z]>0$, where $a(z)$ and $b(z)$ satisfy the following system of equations, for $z\notin \supp(\nu)$
\begin{equation}\label{eq_fixed_point}
    \begin{cases}
        \left[ 2b(z) +z \right] a(z) + \frac13 = 0,\\
        (a(z) + z - \tau b(z)) b(z) + \frac13 = 0,
    \end{cases}
\end{equation}
for some parameter $\tau\in \sR$. Moreover, the density function corresponding to $\nu$ is given by the Stieltjes inverse formula $\nu(dx) = \frac{1}{\pi} \lim_{\varepsilon\to 0}  \Im [ q(x + i\varepsilon) ]$.
\end{definition}

Similarly to the analysis of the first deflation step, we need some additional technical assumptions to describe the limiting singular value $\lambda_2$ and corresponding alignments.

\begin{assumption}\label{assump_existence_second_deflation}
    As $n\to \infty$, there exists a sequence of critical points $(\hat\lambda_2, \hat u_2, \hat v_2, \hat w_2 )$ such that, for $i\in[2]$
    \begin{align*}
        &\hat \lambda_2 \asto \lambda_2, \,\, \hat \theta_{2i} \asto \theta_{2i}, \,\, \hat \kappa \asto \kappa, \,\, \hat \rho_{2i} \asto \rho_{2i},\,\, \hat  \eta \asto \eta,
    \end{align*}
    where $\lambda_2 \notin \supp(\nu)$ with $\nu$ defined in Definition \ref{def_limiting_measure} for $\tau =\gamma\kappa^2-1+\kappa(\gamma-1)$ and suppose $\theta_{2i}, \kappa, \rho_{2i}, \eta > 0$.
\end{assumption}

We therefore have the following theorem which characterizes the limiting spectral measure of $\mM$.

\begin{theorem}\label{thm_limiting_measure_second_deflation}
    Under Assumption \ref{assump_existence_second_deflation}, the spectral measure of $\mM$ converges weakly to the probability measure $\nu$ defined in Definition \ref{def_limiting_measure} for $\tau =\gamma\kappa^2-1+\kappa(\gamma-1)$.
\end{theorem}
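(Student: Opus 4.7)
The plan is to characterize the limiting Stieltjes transform of the empirical spectral measure of $\mM$ via the resolvent method, exploiting the $3\times 3$ block structure of $\mM$ together with the asymmetry introduced by the non-unit contraction vector $\hat u_3 = \hat u_2 - \gamma\langle \hat u_1,\hat u_2\rangle \hat u_1$. Write $\mR(z) = (\mM - z\mI_n)^{-1}$ and partition it into nine $p\times p$ blocks $\mR_{ab}(z)$. Since $\hat v_2$ and $\hat w_2$ are unit and play symmetric roles in the second and third modes (by \eqref{assum_alpha} and the alignment statistics of Assumption \ref{assump_existence_second_deflation}), one expects $\tfrac{1}{p}\Tr \mR_{22}(z) \asymp \tfrac{1}{p}\Tr \mR_{33}(z)$, whereas $\tfrac{1}{p}\Tr \mR_{11}(z)$ must be treated separately: the first block row/column carries $\gW(\hat u_3)$, whose contraction vector satisfies $\|\hat u_3\|^2 = 1 - \gamma(2-\gamma)\kappa^2$ and, crucially, retains a component along $\hat u_1$ with coefficient $-\gamma\langle \hat u_1,\hat u_2\rangle$. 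Define $a(z), b(z)$ as the almost-sure limits of $\tfrac{1}{3p}\Tr \mR_{11}(z)$ and $\tfrac{1}{3p}\Tr \mR_{22}(z)$ respectively, so that $g_{\hat\mu_\mM}(z) \asto q(z) = a(z) + 2 b(z)$.

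The heart of the argument is to derive a closed deterministic system for $(a,b)$. Starting from $\mM \mR(z) = \mI_n + z\mR(z)$, expand block-wise and apply Gaussian integration by parts (Stein's lemma) to every entry of $\gW$ appearing in $\mM$. Using concentration of quadratic forms together with Assumption \ref{assump_existence_second_deflation} to treat $\hat v_2,\hat w_2,\hat u_3$ as effectively deterministic at leading order, the expected block equations reduce to scalar equations whose coefficients depend only on $\|\hat v_2\|^2 = \|\hat w_2\|^2 = 1$, $\|\hat u_3\|^2$, and the cross coupling $\gamma \langle \hat u_1,\hat u_2\rangle$ (which embeds a term $\gamma\kappa\, \gW(\hat u_1)$ inside $\gW(\hat u_3)$). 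After collecting the various contributions, the joint dependence on $\gamma$ and $\kappa$ collapses into the single scalar $\tau = \gamma \kappa^2 - 1 + \kappa(\gamma-1)$, yielding precisely the system \eqref{eq_fixed_point}. A useful sanity check is that $\gamma = 1$ forces $\kappa = 0$ via \eqref{eq_kappa_zero}, whence $\tau = -1$ and the system admits $a = b$ solving $(2b+z)b + \tfrac{1}{3} = 0$; this recovers $q(z) = 3 b(z) = r(z)$ of Theorem \ref{thm_semi_circle}.

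Selecting the branch of the solution that ensures $\Im[q(z)] > 0$ for $\Im[z] > 0$ makes $q$ a bona fide Stieltjes transform, and the Stieltjes inversion formula yields the density of $\nu$ as in Definition \ref{def_limiting_measure}. The Gaussian Poincar\'e inequality applied to the normalized block traces of $\mR(z)$ upgrades convergence in expectation to almost-sure pointwise convergence of $g_{\hat\mu_\mM}(z) \to q(z)$ on a dense subset of $\sC \setminus \sR$, so weak convergence $\hat\mu_\mM \to \nu$ follows from the standard Stieltjes continuity theorem. The main obstacle is Step 3: the contraction vectors $\hat v_2,\hat w_2,\hat u_3$ depend on $\gW$ through the tensor power iteration, and the coupling between $\gW(\hat u_3)$ and $\gW(\hat u_1)$ breaks the three-way symmetry that underlies the first-deflation analysis of \cite{seddik2021random}. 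Disentangling this cross term while controlling the fluctuations of the random contraction vectors is precisely what produces the $\tau$-deformation of the semi-circle law.
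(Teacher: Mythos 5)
Your proposal follows the paper's strategy exactly: block-wise resolvent analysis of the $3\times 3$ block matrix $\mM$, Stein's lemma (Gaussian integration by parts) applied entry-by-entry to $\gW$, low-rank perturbation lemma to discard the deterministic signal contribution, and Stieltjes inversion plus a continuity theorem to pass from convergence of $g_{\hat\mu_\mM}$ to weak convergence of the spectral measure. The sanity check ($\gamma=1 \Rightarrow \kappa=0 \Rightarrow \tau=-1 \Rightarrow a=b=r/3$) is the same one the paper makes after Theorem~\ref{thm_limiting_measure_second_deflation}.

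There is one concrete error of attribution in your reasoning, however. You write that ``the first block row/column carries $\gW(\hat u_3)$,'' and use this to explain why $\tfrac1p\Tr\mR_{11}$ must differ from the other two. That is not where $\gW(\hat u_3)$ sits. Looking at \eqref{eq_second_contraction_matrix}, the first block row is $(0,\ \gW(\hat w_2),\ \gW(\hat v_2))$, both of which are contractions against \emph{unit} vectors; the anomalous block $\gW(\hat u_3)$ occupies positions $(2,3)$ and $(3,2)$. Consequently it is the self-consistency equations for $q_2$ and $q_3$ (giving $b(z)$) that inherit the $\tau$-deformed term, whereas the equation for $q_1$ (giving $a(z)$) stays in the standard form $[2b+z]a+\tfrac13=0$ with no $\tau$. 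Your conclusion about the asymmetry $a\neq b$ is correct, but the mechanism is the opposite of what you state: it is blocks $2$ and $3$, not block $1$, that ``see'' the non-unit contraction. When you carry out the Stein computation for $q_2$ you will find the coefficient of $b(z)$ collapses to $\tau$, which is $-\|\hat u_3\|^2$ up to the cross-term bookkeeping that the paper tracks by splitting $\gW(\hat u_3)=\gW(\hat u_2)-\gamma\kappa\,\gW(\hat u_1)$. Had you tried to close the system under your (mis)labelled asymmetry, you would have put the $\tau$-term in the $a$-equation instead of the $b$-equation, which no longer degenerates to the semi-circle when $\tau=-1$, so the sanity check would have caught it. Otherwise the plan is sound.
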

\vspace{-.5cm}
\begin{proof}
    See Appendix \ref{proof_limiting_measure_second_deflation}.
\end{proof}

\begin{figure}[t!]
    \centering
    \input{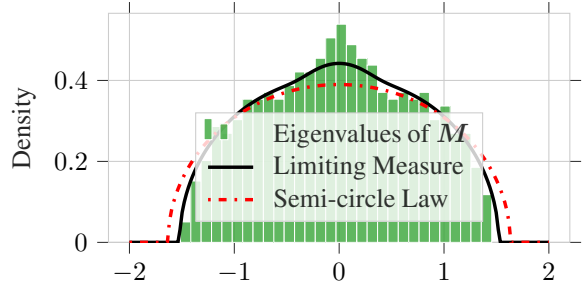}
    \vspace{-.5cm}
    \caption{Histogram of the eigenvalues of $\mM$ and corresponding limiting spectral measure as per Theorem \ref{thm_limiting_measure_second_deflation}. We considered $p=200$, $\beta_1=20$, $\beta_2=15$, $\alpha=0.8$, $\gamma=0.85$ and one realization of $\gT_1$.}
    \label{fig_spectrum_second_deflation}
\end{figure}

In essence, if the involved alignments in the second deflation step converge asymptotically, Theorem \ref{thm_limiting_measure_second_deflation} states the convergence of the spectral measure of $\mM$ to the deterministic measure $\nu$ defined in Definition \ref{def_limiting_measure} for $\tau =\gamma\kappa^2-1+\kappa(\gamma-1)$. We particularly recall that $\kappa$ corresponds to the limit of $\langle \hat u_1, \hat u_2 \rangle$, which highlights the fact that the spectrum of $\mM$ can be deformed if the singular vectors $u_1$ and $u_2$ are correlated, i.e., if $\gamma\neq 1$. This phenomenon is depicted in Figure \ref{fig_spectrum_second_deflation} where we see that for $\gamma=0.85$, the limiting spectral measure of $\mM$ defers from the semi-circle law. In contrast, if $\gamma=1$ we have $\kappa=0$ as we saw in \eqref{eq_kappa_zero} which implies that $\tau=-1$. In this case, the limiting spectral measure $\nu$ becomes equal to $\mu$, thereby describing again a semi-circle law. This can be trivially checked from Definition \ref{def_limiting_measure} by setting $\tau=-1$ and $a(z)=b(z)$, and we therefore find $a(z) = b(z) = \frac{r(z)}{3}$ and $q(z) = r(z)$. Note that for $\gamma\in(0, 1)$, the Stieltjes transform $q(z)$ can be computed numerically by alternating the equations in \eqref{eq_fixed_point} as per Algorithm \ref{alg_fixed_point}, which can be proved to converge to a fixed point in the same vein as \cite{louart2018concentration}.

\textbf{Asymptotic Singular Value and Alignments:} As for the first deflation step, taking the expectation w.r.t.\@ $\gW$ of the identity $\hat \lambda_2 = \gT_2(\hat u_2, \hat v_2, \hat w_2)$ in \eqref{eq_singular_values_vectors} allows us to obtain the equation satisfied by $\lambda_2$, see Appendix \ref{proof_singular_value_second_deflation}, which yields
\begin{equation}
    \begin{split}
        f_q(\lambda_2) &-   \frac{\gamma\kappa \eta^2}{3} r(\lambda_1) -   2\gamma \kappa^2 b(\lambda_2) = \sum_{i=1}^2 \beta_i \theta_{2i} \rho_{2i}^2  -    \gamma \kappa \sum_{i=1}^2 \beta_i \rho_{1i} \rho_{2i}^2,
    \end{split}
\end{equation}
where $f_q(z) = z + q(z)$. Again, the limiting singular value $\lambda_2$ must lie outside the support of $\nu$, as we assumed in Assumption \ref{assump_existence_second_deflation}, since its corresponding Stieltjes transform $q(z)$ (and the function $b(\cdot)$) needs to be evaluated at $\lambda_2$. In fact, if $\lambda_2\in \supp(\nu)$, then it is information-theoretically impossible to recover the second signal term (i.e., the one with strength $\min \{ \beta_1, \beta_2\} $).

\begin{table*}[t!]
    \begin{equation}\label{eq_system_general_gamma}
    \begin{cases}
        f_q(\lambda_2) -   \frac{\gamma\kappa \eta^2}{3} r(\lambda_1) -   2\gamma \kappa^2 b(\lambda_2)= \sum_{i=1}^2 \beta_i \theta_{2i} \rho_{2i}^2  -    \gamma \kappa \sum_{i=1}^2 \beta_i \rho_{1i} \rho_{2i}^2,\\
        [f_q (\lambda_2) - a(\lambda_2)] \theta_{2j} - \gamma \rho_{1j} \left[ \frac{\eta^2}{3} r(\lambda_1) +  2\kappa b(\lambda_2)  \right]=\sum_{i=1}^2 \beta_i \alpha_{ij} \rho_{2i}^2 -  \gamma \rho_{1j} \sum_{i=1}^2 \beta_i \rho_{1i} \rho_{2i}^2 \quad \text{for} \quad j\in[2],\\
        \left[ \lambda_2 + 2(1 - {\gamma}) b(\lambda_2)  \right] \kappa= (1 - {\gamma}) \left[\sum_{i=1}^2 \beta_i \rho_{1i} \rho_{2i}^2 - \frac{\eta^2}{3} r(\lambda_1) \right],\\
        \left[ f_q(\lambda_2) - (1 + \gamma  \kappa^2) b(\lambda_2)\right]  \rho_{2j} =  \sum_{i=1}^2 \beta_i \theta_{2i} \rho_{2i} \alpha_{ij}  -  \gamma  \kappa  \left[ \sum_{i=1}^2 \beta_i \rho_{1i} \rho_{2i} \alpha_{ij}  - \frac{\rho_{1j} \eta}{3} r(\lambda_1) \right]\,\text{for}\quad j\in [2], \\
        \left[ \lambda_2 + a(\lambda_2) + (1 - \gamma \kappa^2) b(\lambda_2) - \frac{\gamma \kappa }{3} r(\lambda_1) \right] \eta   = \sum_{i=1}^2 \beta_i \theta_{2i} \rho_{1i} \rho_{2i} - {\gamma}\kappa  \sum_{i=1}^2 \beta_i \rho_{1i}^2 \rho_{2i}.
    \end{cases}
\end{equation}
\vspace{-.7cm}
\end{table*}

Moreover, taking the expectation w.r.t.\@ $\gW$ of the remaining identities in \eqref{eq_singular_values_vectors} for $i=2$, projected on the signal components $x_i, y_i, z_i$ for $i\in [2]$ and the first singular vectors $u_1, v_1, w_1$, allows us to derive the result characterizing the behavior of the second deflation step.

\begin{theorem}\label{thm_second_deflation} Under Assumption \ref{assump_existence_second_deflation}, the limiting singular value $\lambda_2$ and corresponding alignments $\theta_{2i}, \rho_{2i}$ for $i\in[2]$ and $\kappa, \eta$ of the second deflation step satisfy the system of equations in \eqref{eq_system_general_gamma}.
\end{theorem}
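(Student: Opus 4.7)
The plan is to extend the technique used for Theorem~\ref{thm_first_deflation_step} to the deflated tensor $\gT_2$. The seven scalar relations packed into \eqref{eq_system_general_gamma} arise by taking expectations, with respect to $\gW$, of the four identities in \eqref{eq_singular_values_vectors} evaluated at $i=2$ after projecting them onto suitable test vectors: the scalar identity $\hat\lambda_2 = \gT_2(\hat u_2,\hat v_2,\hat w_2)$ gives the first equation; projecting $\gT_2(\cdot,\hat v_2,\hat w_2) = \hat\lambda_2 \hat u_2$ onto each signal component $x_j$ yields the second block and onto $\hat u_1$ yields the third; projecting $\gT_2(\hat u_2,\cdot,\hat w_2) = \hat\lambda_2 \hat v_2$ onto $y_j$ and onto $\hat v_1$ produces the fourth and fifth blocks. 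The third-mode identity reproduces the same equations under the symmetry assumption \eqref{assum_alpha}. Concentration of all the involved alignments around their asymptotic values can be argued along the lines of \cite{benaych2011fluctuations}, so that one can freely replace $\hat\lambda_2,\hat\kappa,\hat\eta,\hat\rho_{2i},\hat\theta_{2i}$ by their deterministic limits whenever they appear as multipliers of $O(1)$ quantities.

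For each projection, substituting $\gT_2 = \gT_1 - \gamma\hat u_1\otimes\gT_1(\hat u_1)$ and $\gT_1 = \gS + n^{-1/2}\gW$, and then exploiting linearity in the first mode to combine the two $\gT_1$-contractions, yields a common term of the form $\gT_1(\hat u_3,\cdot,\cdot)$ with $\hat u_3 = \hat u_2 - \gamma\kappa\hat u_1$. Its signal part $\gS(\hat u_3,\cdot,\cdot)$ immediately produces the right-hand sides, namely sums of the form $\sum_i \beta_i(\theta_{2i}-\gamma\kappa\rho_{1i})\rho_{2i}^2$ and their projected analogues. Its noise part $n^{-1/2}\gW(\hat u_3,\cdot,\cdot)$ is then treated by Gaussian integration by parts (Stein's lemma): derivatives $\partial_{W_{ijk}}\hat u_2,\hat v_2,\hat w_2$ obtained by implicit differentiation of \eqref{eq_singular_values_vectors} convert into blocks of the resolvent of $\mM$ from \eqref{eq_second_contraction_matrix}, contributing $a(\lambda_2)$ through its diagonal blocks and $b(\lambda_2)$ through its off-diagonal ones per Definition~\ref{def_limiting_measure}. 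Which linear combination shows up in each equation (for instance $f_q(\lambda_2)$ in the first, $f_q(\lambda_2)-a(\lambda_2)$ in the second, $f_q(\lambda_2)-(1+\gamma\kappa^2) b(\lambda_2)$ in the fourth) is then dictated by which of $\hat u_2,\hat v_2,\hat w_2$ is left free after the projection and by the $\gamma\kappa$ shift inside $\hat u_3$, which in particular accounts for the weight $\|\hat u_3\|^2 = 1 - \gamma\kappa^2(2-\gamma)$ multiplying the variance of $\gW(\hat u_3,\cdot,\cdot)$.

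The phenomenon that is genuinely new compared with the first step is the coupling between the two deflations. The term $-\gamma\kappa\hat u_1$ inside $\hat u_3$, together with the explicit projections onto $\hat u_1,\hat v_1$ used in the third and fifth equations, force $\partial_{W_{ijk}}\hat u_1$ and $\partial_{W_{ijk}}\hat v_1$ to appear in the Stein calculation; these are expressed through the resolvent of $\mN$ from the first step, and this is how the Stieltjes transform $r(\lambda_1)$ enters, always multiplied by geometric overlap factors between the first and second singular triples, giving rise to the $\tfrac{\eta^2}{3}r(\lambda_1)$, $\tfrac{\rho_{1j}\eta}{3}r(\lambda_1)$ and $\tfrac{\gamma\kappa}{3}r(\lambda_1)$ contributions. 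I expect the main obstacle to be precisely the bookkeeping of this cross-step Gaussian calculus: tracking which derivatives are leading order, routing each either to the $\mM$-resolvent or to the $\mN$-resolvent, and identifying every surviving inner product as a member of the asymptotic alignment set $\{\rho_{1i},\theta_{2i},\rho_{2i},\kappa,\eta\}$. Once this is carried out systematically, collecting terms produces \eqref{eq_system_general_gamma}, with the prefactors $\gamma\kappa$, $1-\gamma$ and $1+\gamma\kappa^2$ arising naturally from the decomposition $\hat u_3 = \hat u_2 - \gamma\kappa\hat u_1$ and from the variance structure of $\gW(\hat u_3,\cdot,\cdot)$; in particular the $(1-\gamma)$ factor in the third equation is consistent with \eqref{eq_kappa_zero}, since it forces $\kappa=0$ when $\gamma=1$.
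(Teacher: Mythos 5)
Your proposal takes essentially the same route as the paper: expectations of the singular-value identities projected onto $x_j$, $y_j$, $\hat u_1$, $\hat v_1$, with Stein's lemma routing noise contributions to the $\mM$-resolvent (second step) and the $\mN$-resolvent (first step, producing the $r(\lambda_1)$ factors), then identifying surviving overlaps with the alignment set. One small imprecision worth flagging: combining the two $\gT_1$ contractions by linearity in the first mode does \emph{not} always produce $\gT_1(\hat u_3,\cdot,\cdot)$. It does for the $\lambda_2$, $\rho_{2j}$ and $\eta$ projections, but for the $\theta_{2j}$ equation the combined first argument is $x_j-\gamma\hat\rho_{1j}\hat u_1$ and for the $\kappa$ equation it is $(1-\gamma)\hat u_1$, which is precisely why the latter carries the $(1-\gamma)$ prefactor. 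The paper avoids this ambiguity by keeping the two $\gT_1$ contractions separate throughout the Stein calculation, which makes the derivative bookkeeping more transparent (e.g.\ $\partial\hat u_1$ is fed to $\mN$ and $\partial\hat u_2,\hat v_2,\hat w_2$ to $\mM$, without needing to differentiate the random scalar $\hat\kappa$ hidden inside $\hat u_3$); your grouping is algebraically equivalent but requires more care to avoid double-counting the $\hat\kappa$-dependence.
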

\vspace{-.5cm}
\begin{proof}
    See Appendices \ref{proof_singular_value_second_deflation} and \ref{proof_alignments_second_deflation}.
\end{proof}

Figure \ref{fig_alignments_second_deflation} depicts the simulated singular value and alignments of the second deflation step and their asymptotic counterparts as given by Theorem \ref{thm_second_deflation}. In essence, the asymptotic behavior of the second deflation step is described by a set of seven polynomial equations in $\lambda_2$ and the alignments $\theta_{2i}, \rho_{2i}, \kappa$ and $\eta$. Again, we do not address the existence and uniqueness of such solutions, and we solve the system in \eqref{eq_system_general_gamma} numerically starting from the simulated singular value and alignments from one realization of $\gT_1$. Contrarily to the first deflation step, we highlight that the Stieltjes transform $q(z)$ depends on the alignment $\kappa$. Therefore, we alternate solving the system in \eqref{eq_system_general_gamma} with the fixed point equations in \eqref{eq_fixed_point} for $\tau = \gamma \kappa^2 - 1 + \kappa(\gamma -1)$ as per Theorem \ref{thm_limiting_measure_second_deflation}.

We further stress out that for a fixed $\beta_2$ large enough and $\alpha\neq 1$, there exists a threshold for $\beta_1$ below which it is information-theoretically impossible to recover the second signal component. This can be visible from Figure \ref{fig_alignments_second_deflation} for $\beta_1 \approx 2 $, below which all the alignments are asymptotically null and the asymptotic singular value converges to the right edge of the distribution $\nu$. Precisely, this corresponds to the scenario where Assumption \ref{assump_existence_second_deflation} is not verified. Moreover, not that there might also exist a theoretical-algorithmic spectral gap, that needs to be determined for the present deflation procedure, in the same vein as in \cite{montanari2014statistical} for the rank-one case.

\begin{figure*}[t!]
    \centering
    \input{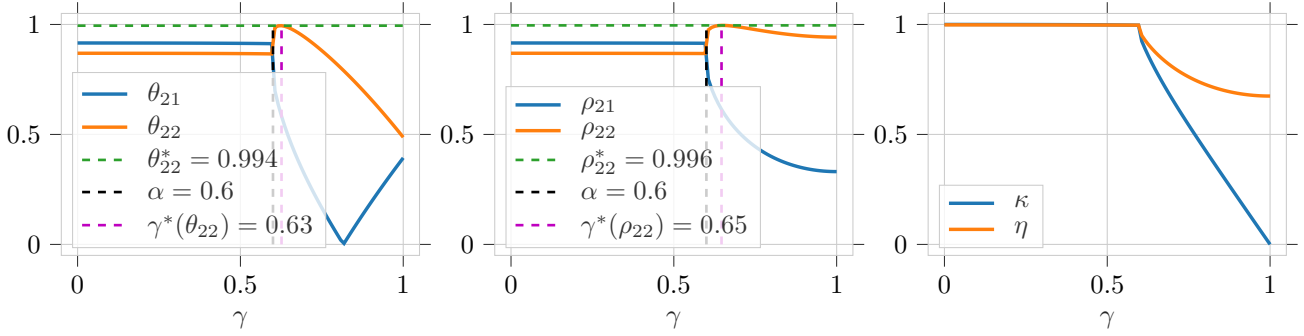}
    \vspace{-.8cm}
    \caption{Asymptotic alignments of the second deflation varying the hyper-parameter $\gamma$. We considered $\beta_1=10$, $\beta_2 = 8$ and $\alpha=0.6$.}
    \label{fig_varying_gamma}
\end{figure*}

\textbf{Case $\gamma=1$:} As we discussed earlier, in the case $\gamma=1$ the limiting spectral measure $\nu$ becomes equal to the semi-circle law $\mu$ described in the first deflation step. Moreover, the system of equations in \eqref{eq_system_general_gamma} reduces to the following equations, for $j\in [2]$, which will be needed subsequently.
\begin{equation}\label{eq_system_gamma_one}
    \begin{cases}
        f_r(\lambda_2)  = \sum_{i=1}^2 \beta_i \theta_{2i} \rho_{2i}^2  \\
        h_r (\lambda_2) \theta_{2j} -  \frac{\eta^2}{3} r(\lambda_1) \rho_{1j}  =\sum_{i=1}^2 \beta_i \alpha_{ij} \rho_{2i}^2 -   \rho_{1j} \sum_{i=1}^2 \beta_i \rho_{1i} \rho_{2i}^2\\
         h_r(\lambda_2)   \rho_{2j} =  \sum_{i=1}^2 \beta_i \theta_{2i} \rho_{2i} \alpha_{ij}  \\
        \left[ \lambda_2 + \frac23 r(\lambda_2)  \right] \eta   = \sum_{i=1}^2 \beta_i \theta_{2i} \rho_{1i} \rho_{2i} 
    \end{cases}
\end{equation}

\subsubsection{Model Parameters Estimation}\label{sec_model_estimation}
In this section, we discuss the problem of estimating the underlying model parameters, namely the SNRs and the signal components correlation $\vbeta \equiv (\beta_1, \beta_2, \alpha)\in \sR^3$, and the alignments $\vrho \equiv (\rho_{1i}, \rho_{2i}, \theta_{2i}\mid i\in [2])\in \sR^6$ from one realization of the random tensor $\gT_1$. Indeed, this will allow us to design an improved deflation algorithm by optimizing the parameter $\gamma$ introduced in the second deflation step. Further denoting $\vlambda \equiv (\lambda_1, \lambda_2, \eta)\in \sR^3$, we define the mapping $\psi:\sR^3\times \sR^3\times \sR^6 \to \sR^9$ through \eqref{eq_psi}, where the first three entries of the vector $\psi(\vbeta, \vlambda, \vrho)$ correspond to the first deflation step equations in \eqref{eq_system_first_deflation} while the remaining entries correspond to the second deflation step for $\gamma=1$ characterized by \eqref{eq_system_gamma_one}. In particular, the singular values $\lambda_1, \lambda_2$ and the corresponding alignments satisfy $\psi (\vbeta, \vlambda, \vrho) = 0$. On the other hand, given an estimate $\hat \vlambda = (\hat \lambda_1, \hat \lambda_2, \hat \eta)$ of $\vlambda$, which can be computed via tensor power iteration as discussed previously, we can solve $\psi(\cdot, \hat \vlambda, \cdot)=0$ in the variables $\vbeta$ and $\vrho$ while fixing $\hat \vlambda$, which allows us to estimate the model parameters $\hat\vbeta$ and $\hat\vrho$. In particular, Figure \ref{fig_model_estimation} supports this statement where we see that solving $\psi(\cdot, \hat \vlambda, \cdot)=0$ allows us to estimate $\beta_1$ and $\beta_2$ with reasonably low variance. Further details are deferred to Appendix \ref{appendix_parameters_estimation}.

\subsection{RTT-Improved Tensor Deflation Algorithm}\label{sec_improved}
\begin{figure}[t!]
    \centering
    \input{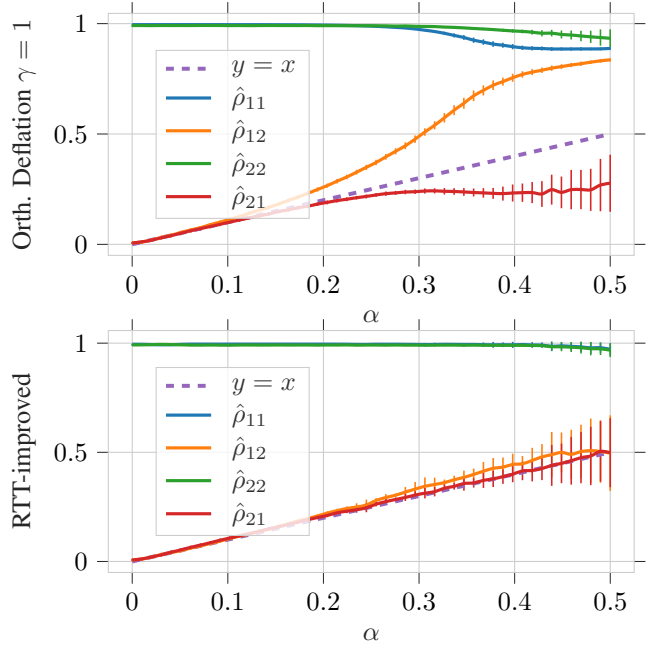}
    \vspace{-.4cm}
    \caption{Alignments of first and second deflation steps in terms of $\alpha$. (top) Performance of standard orthogonalized deflation ($\gamma=1$) and (bottom) of the RTT-improved orthogonalized deflation Algorithm \ref{alg_RTT_improved}. We considered $\beta_1=6$, $\beta_2=5.7$ and $p=150$. The curves are obtained by averaging over $200$ realizations of $\gT_1$ and we depict the means and std of the different alignments.}
    \label{fig_improved}
\end{figure}

We are now in place to describe our improved tensor deflation algorithm. Our principal insight lies in the fact that, for $\beta_1 > \beta_2$ for instance, the asymptotic alignments $\theta_{22}$ and $\rho_{22}$ at the second deflation step are concave functions of the parameter $\gamma$ as depicted in Figure \ref{fig_varying_gamma} for $\alpha=0.6$ and Figure \ref{fig_gamma_appendix} for different values of $\alpha$. Therefore, there exists an optimal value $\gamma^*$ which maximizes such alignments and which we need to tune in order to recover the signal components efficiently, given only one realization of the spiked random tensor $\gT_1$. To this end, we propose the following procedure, which is deferred in the Appendix in Algorithm \ref{alg_RTT_improved} due to space limitation:

\begin{itemize}
    \item First we perform a standard orthogonalized tensor deflation with $\gamma = 1$ which corresponds to the steps 1 and 2 of Algorithm \ref{alg_RTT_improved}.
    \item Then we estimate the underlying model parameters, i.e., $\beta_1, \beta_2$ and $\alpha$ as we discussed in Section \ref{sec_model_estimation}. This corresponds to the steps 3 and 4 of Algorithm \ref{alg_RTT_improved}.
    \item In order to find the optimal parameter $\gamma^*$ which maximizes the alignment $\hat\rho_{22}$ for instance. We update $\gamma$ as $\gamma\leftarrow \gamma - \epsilon$ for some small step size $\epsilon > 0$ and starting from $\gamma = 1$, while solving the system in \eqref{eq_system_general_gamma} to get an estimation for $\hat\rho_{22}$. We stop updating $\gamma$ until the maximum value of $\hat\rho_{22}$ is reached and we return the corresponding $\gamma$ as $\gamma^*$. Note that at each iteration, the system in \eqref{eq_system_general_gamma} is solved numerically and initialized with the previous iteration estimates. This corresponds to the steps 5-12 in Algorithm \ref{alg_RTT_improved}.
    \item We then perform orthogonalized deflation with $\gamma^*$ along the modes $1$ and $2$ which provides better estimation of the signal component denoted as $\hat \lambda_2 \hat u_2^* \otimes \hat v_2^* \otimes \hat w_2^*$. This corresponds to steps 13 and 14 of Algorithm \ref{alg_RTT_improved}.
    \item Finally in step 15 of Algorithm \ref{alg_RTT_improved}, we re-estimate the first signal component by performing a best rank-one approximation of $\gT_1 - \min\{\hat\beta_1, \hat\beta_2 \}  \hat u_2^* \otimes \hat v_2^* \otimes \hat w_2^*$ with $\hat\beta_1, \hat\beta_2$ the estimated SNRs from step 4 of Algorithm \ref{alg_RTT_improved}.
\end{itemize}

Figure \ref{fig_improved} depicts the performances of the standard orthogonalized deflation ($\gamma=1$) and our proposed RTT-improved version, while varying the signal correlation parameter $\alpha$. As theoretically anticipated, the RTT-improved algorithm recovers successively the signal components in the more challenging correlated setting (e.g., $\alpha\geq 0.3$). 

\section{Conclusion and Future Work}
We have showcased a concrete example where random tensor theory allows us to understand and even improve signal recovery from low-rank asymmetric spiked random tensors. To the best of our knowledge, this is the first time where an asymptotic characterization of the considered deflation method is carried out. We highlight that our methodology can be straightforwardly generalized to higher order and higher (low) rank tensors in the same vein as \cite{seddik2021random}, which studied order-$d$ and rank-one spiked tensor models. Such generalization will require more analytical computations and will result in more complicated systems of equations. Hence, for the sake of clarity, we limited our analysis to the simpler order-three and rank-two model.

Our approach has still many limitations and raises various open questions which we discuss as follows: (i) Our main results rely on Assumptions \ref{assump_existence_first_deflation} and \ref{assump_existence_second_deflation} which basically suppose the almost convergence of the singular values and alignments of interest. Similar Assumptions were made and discussed in \cite{goulart2021random, seddik2021random} which also relied on a RMT approach. A formal proof of these statements is still required and would make our analysis more complete. (ii) The second point concerns the existence and uniqueness of the solutions of the involved systems of equations. (iii) The third point concerns a proof of consistency of the underlying model parameters estimation. Specifically, proving a central limit theorem about the convergence of our estimates to the true parameters and the related conditions. We do not address these questions in our present analysis and we defer them to a future study.

Nevertheless, our actual results pave already a new path towards the analysis and improvement of more sophisticated tensor methods and models, by means of random tensor theory, thereby impacting tensor-based machine learning methods and many other applications which rely on tensors.

\textbf{Code and Reproducibility:} A Github repository will be provided for the reproducibility of our simulations and results, with an implementation of Algorithm \ref{alg_RTT_improved}.


\bibliography{refs}
\bibliographystyle{icml2023}


\section{Additional Simulations}
\subsection{Simulated and Asymptotic Alignments at Second Deflation Step}
Figure \ref{fig_alignments_second_deflation} depicts the simulated and asymptotic singular value and alignments of the second deflation step as described in Section \ref{sec_second_deflation}. Since Theorem \ref{thm_second_deflation} requires Assumption \ref{assump_existence_second_deflation}, the system of equations in \eqref{eq_system_general_gamma} may have many solutions in general, but when initializing it with the simulated values of $\lambda_2$ and alignments, we obtain consistent characterization of their asymptotic counterparts. As we discussed in the conclusion, the existence and uniqueness of the solutions of \eqref{eq_system_general_gamma} is left for a future study.
\begin{figure}[h!]
    \centering
    \input{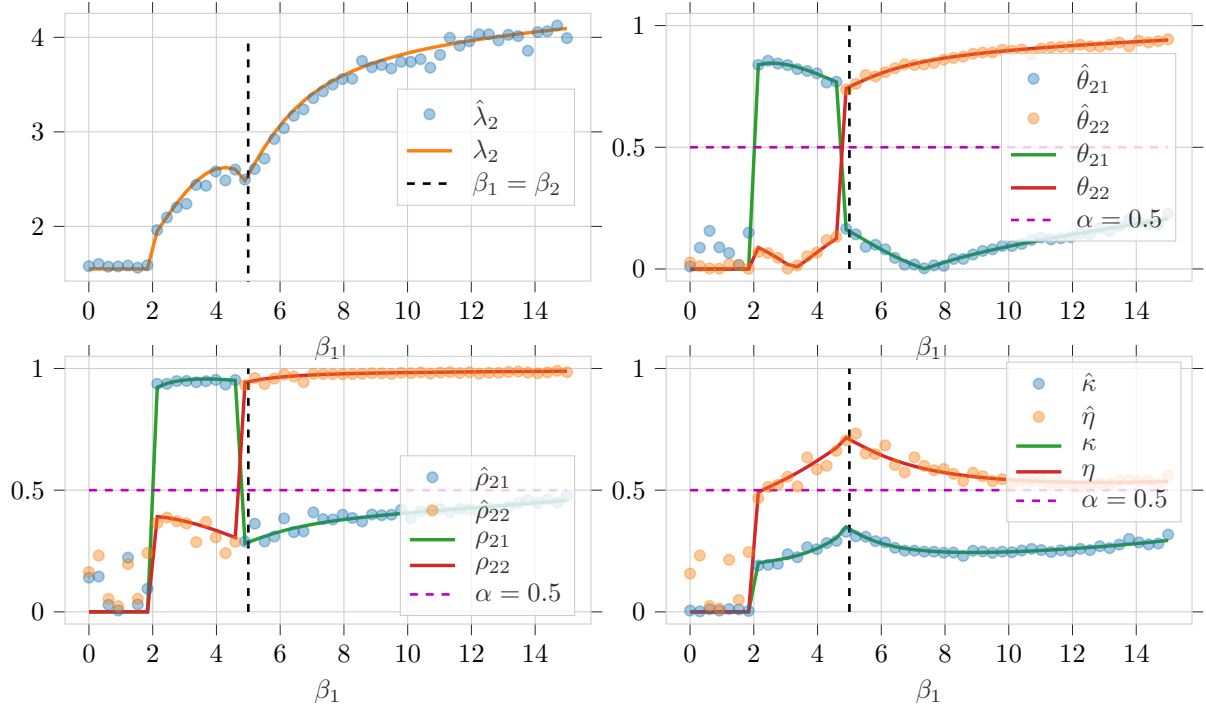}
    \caption{Simulated versus asymptotic singular value and alignments corresponding to the second deflation step as per Theorem \ref{thm_second_deflation}. We considered $\beta_1=5$, $\alpha=0.5$, $p=100$, $\gamma=0.8$ and varying $\beta_1\in [0, 15]$. The system of equations in \eqref{eq_system_general_gamma} is solved numerically and initialized with the simulated singular value and alignments (dotted curves) from one realization of $\gT_1$.}
    \label{fig_alignments_second_deflation}
\end{figure}

\begin{figure*}[h!]
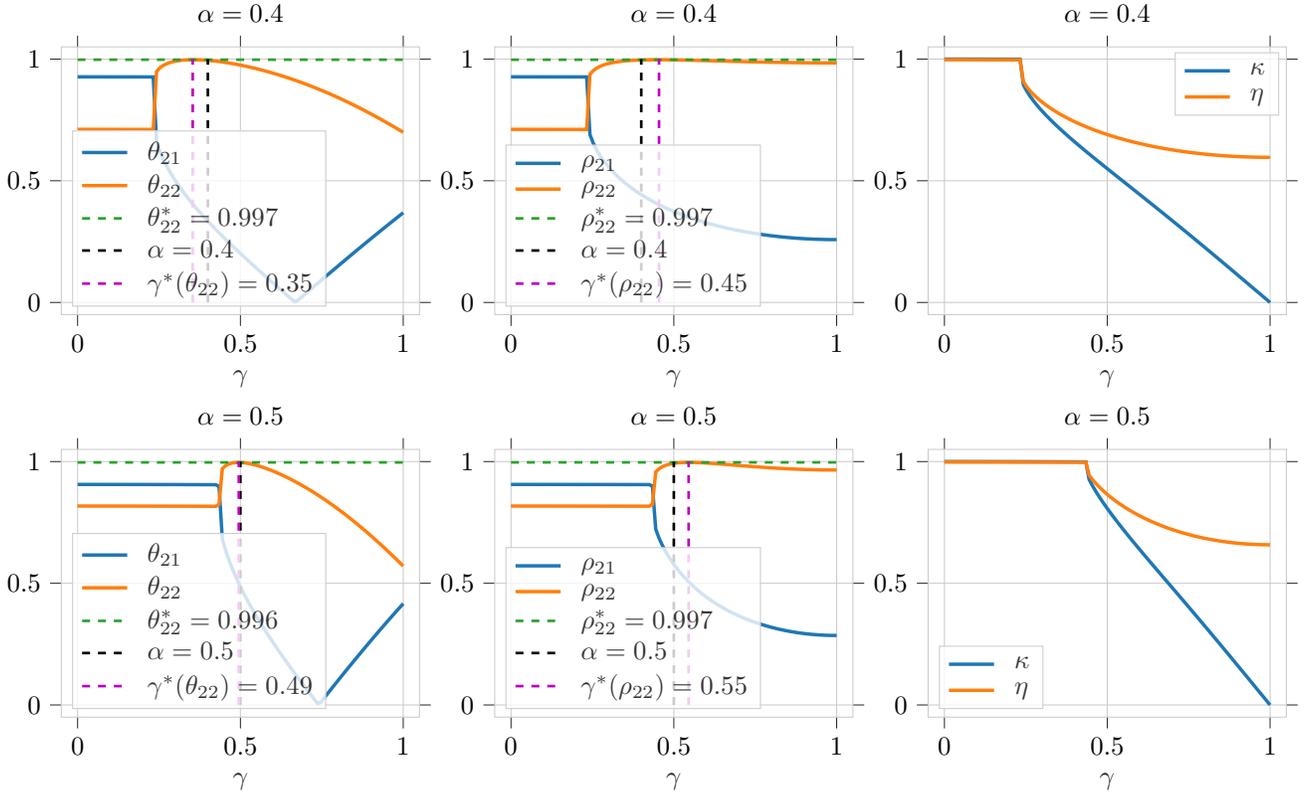

    \centering
    \input{figs/optimization_of_gamma_alpha_0.4.tex}
    \input{figs/optimization_of_gamma_alpha_0.5.tex}
    \vspace{-.7cm}
    \caption{Asymptotic alignments of the second deflation step in terms of $\gamma$ and $\alpha$. We considered $\beta_1=10$ and $\beta_2 = 8$.}
    \label{fig_gamma_appendix}
\end{figure*}

\subsection{More on Model Parameters Estimation}\label{appendix_parameters_estimation}
In this section, we provide more discussions about the model parameters estimation described in Section \ref{sec_model_estimation}. An import aspect about such estimation, is to prove its consistency. Namely, demonstrating a CLT result which shows the concentration of $\hat \vbeta$ around the true $\vbeta$ as well as for $\hat \vrho$. We currently support this statement through simulations as depicted in Figures \ref{fig_model_estimation} and \ref{fig_alignments_estimation}. Note however that, given the concentration of $\hat \vlambda$, we believe that such consistency can be ensured with additional assumptions on the function $\psi$ in \eqref{eq_psi} and in particular the existence and uniqueness of
solution to the equation $\psi(\cdot, \hat \vlambda, \cdot)=0$. 
 \begin{equation}\label{eq_psi}
\begin{split}
     \psi: (\vbeta, \vlambda, \vrho) \mapsto \begin{pmatrix}
        f_r(\lambda_1) - \sum_{i=1}^2 \beta_i \rho_{1i}^3\\
        h_r(\lambda_1) \rho_{11} - \sum_{i=1}^2 \beta_i \alpha_{i1} \rho_{1i}^2\\
        h_r(\lambda_1) \rho_{12} - \sum_{i=1}^2 \beta_i \alpha_{i2} \rho_{1i}^2\\
        f_r(\lambda_2)  - \sum_{i=1}^2 \beta_i \theta_{2i} \rho_{2i}^2\\
        h_r (\lambda_2) \theta_{21} -  \frac{\eta^2}{3} r(\lambda_1) \rho_{11}  -\sum_{i=1}^2 \beta_i \alpha_{i1} \rho_{2i}^2 +   \rho_{11} \sum_{i=1}^2 \beta_i \rho_{1i} \rho_{2i}^2\\
        h_r (\lambda_2) \theta_{22} -  \frac{\eta^2}{3} r(\lambda_1) \rho_{12}  -\sum_{i=1}^2 \beta_i \alpha_{i2} \rho_{2i}^2 +   \rho_{12} \sum_{i=1}^2 \beta_i \rho_{1i} \rho_{2i}^2\\
        h_r(\lambda_2)   \rho_{21} -  \sum_{i=1}^2 \beta_i \theta_{2i} \rho_{2i} \alpha_{i1}\\
        h_r(\lambda_2)   \rho_{22} -  \sum_{i=1}^2 \beta_i \theta_{2i} \rho_{2i} \alpha_{i2}\\
        \left[ \lambda_2 + \frac23 r(\lambda_2)  \right] \eta   - \sum_{i=1}^2 \beta_i \theta_{2i} \rho_{1i} \rho_{2i} 
    \end{pmatrix}.
\end{split}
\end{equation}

\begin{figure}[h!]
    \centering
    \input{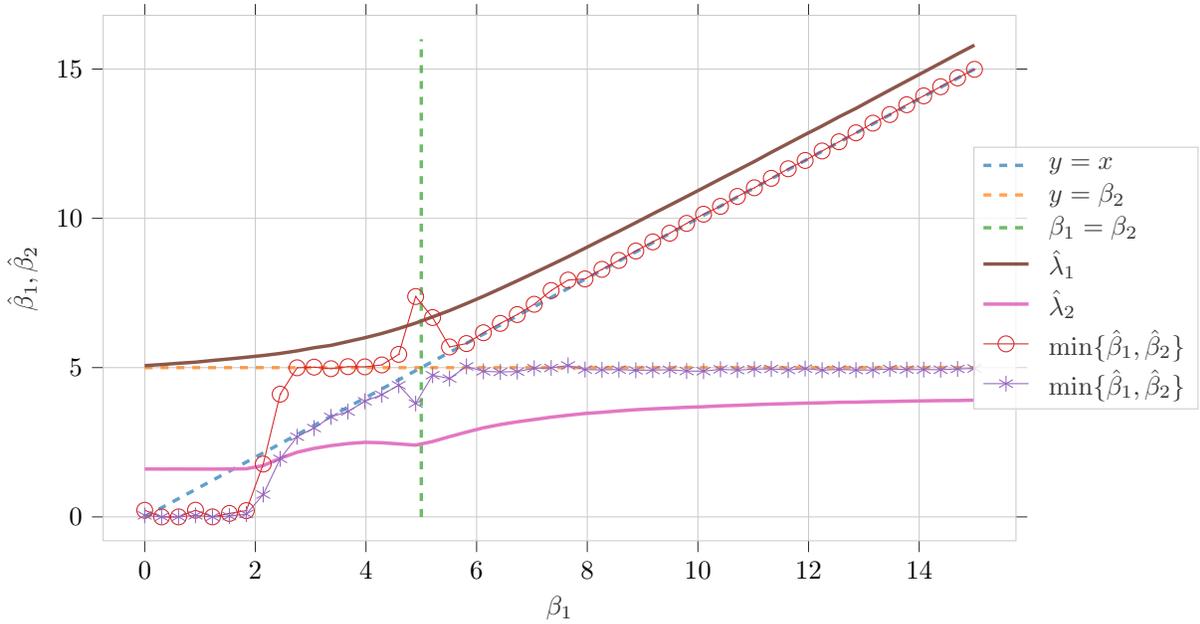}
    \caption{Estimation of the underlying SNRs $ \beta_1$ and $\beta_2$ as described in Section \ref{sec_model_estimation}. We considered $\beta_2=5$, $\alpha=0.5$, $p=150$ and $\gamma=1$ while varying $\beta_2$. The parameters are estimated only from the singular values $\hat\lambda_1$, $\hat \lambda_2$ and the alignment between the singular vectors $\hat \eta = \langle \hat v_1, \hat v_2 \rangle$, computed via tensor power iteration. The curves are averaged over $100$ realizations of $\gT_1$.}
    \label{fig_model_estimation}
\end{figure}

\begin{figure}[h!]
    \centering
    \input{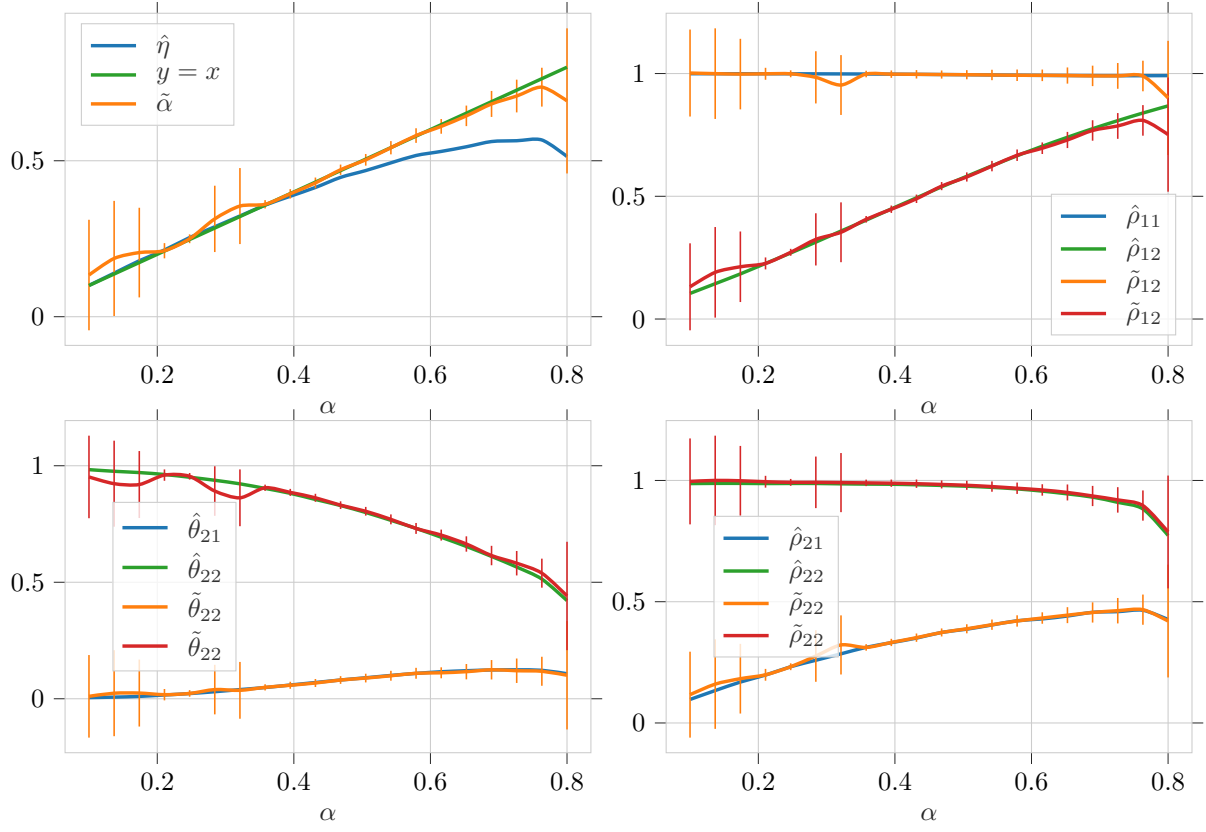}
    \caption{Estimation of the alignments as described in Section \ref{sec_model_estimation} from one realization of the random tensor $\gT_1$. We considered $\beta_1 = 15$, $\beta_2=5$, $\gamma=1$, $p=100$ while varying $\alpha$. The curves are averaged over $100$ realizations of $\gT_1$. The hats correspond to simulations while tildes correspond to the estimated alignments as per Section \ref{sec_model_estimation}.}
    \label{fig_alignments_estimation}
\end{figure}

\section{Some Key Lemmas}
In this section, we recall some key lemmas that are at the heart of our analysis.
\begin{lemma}[Woodbury matrix identity]\label{lemma_woodbury} Let $\mA\in \sR^{n\times n}$, $\mB \in \sR^{r\times r}$, $\mU\in \sR^{n\times r}$ and $\mV \in \sR^{r\times n}$, we have
\begin{align*}
    \left( \mA + \mU \mB \mV \right)^{-1} = \mA^{-1} - \mA^{-1} \mU \left( \mB^{-1} + \mV \mA^{-1} \mU \right)^{-1} \mV \mA^{-1}
\end{align*}
\end{lemma}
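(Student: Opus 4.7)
The plan is to verify the identity by direct left-multiplication: compute $(\mA + \mU\mB\mV)$ times the proposed right-hand side and show it equals $\mI_n$. Since $\mA$ and $\mB$ (and hence $\mB^{-1} + \mV\mA^{-1}\mU$, which we assume invertible) are involved only through their inverses, no additional assumptions beyond invertibility of these three matrices are needed, and the identity becomes a purely algebraic check.

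Concretely, I would first expand
\begin{align*}
(\mA + \mU\mB\mV)\bigl[\mA^{-1} - \mA^{-1}\mU(\mB^{-1} + \mV\mA^{-1}\mU)^{-1}\mV\mA^{-1}\bigr]
&= \mI_n + \mU\mB\mV\mA^{-1} \\
&\quad - \bigl(\mI_n + \mU\mB\mV\mA^{-1}\bigr)\mU(\mB^{-1} + \mV\mA^{-1}\mU)^{-1}\mV\mA^{-1}.
\end{align*}
The second step is to factor the troublesome term: the matrix $(\mI_n + \mU\mB\mV\mA^{-1})\mU$ equals $\mU + \mU\mB\mV\mA^{-1}\mU = \mU\mB(\mB^{-1} + \mV\mA^{-1}\mU)$, so multiplying by $(\mB^{-1} + \mV\mA^{-1}\mU)^{-1}$ collapses everything to $\mU\mB$. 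Substituting back gives $\mI_n + \mU\mB\mV\mA^{-1} - \mU\mB\mV\mA^{-1} = \mI_n$, which establishes the identity from the left; a symmetric computation (or the uniqueness of the inverse for a square matrix) handles the right.

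There is no real obstacle here: the only delicate point is the factorization $\mU + \mU\mB\mV\mA^{-1}\mU = \mU\mB(\mB^{-1} + \mV\mA^{-1}\mU)$, which is the key algebraic trick that makes the whole identity work. The rest is bookkeeping. I would also briefly remark that the result implicitly requires $\mA$, $\mB$, and $\mB^{-1} + \mV\mA^{-1}\mU$ to be invertible; under these conditions $\mA + \mU\mB\mV$ is automatically invertible and the displayed formula gives its inverse.
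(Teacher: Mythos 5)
Your verification is correct: the expansion, the key factorization $\mU + \mU\mB\mV\mA^{-1}\mU = \mU\mB\left( \mB^{-1} + \mV\mA^{-1}\mU \right)$, and the remark that a one-sided inverse of a square matrix is the inverse are all sound, and you appropriately flag the implicit invertibility of $\mA$, $\mB$, and $\mB^{-1} + \mV\mA^{-1}\mU$. The paper itself states the Woodbury identity as a classical fact without any proof (it is only used later to justify the perturbation lemma), so your direct multiplication argument is exactly the standard proof one would supply and there is nothing in the paper to compare it against.
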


The following perturbation lemma is wildly used in RMT. Basically, it states that the normalized trace operator is invariant through low-rank perturbations in high dimension. The notation $a=O_n(b_n)$ means that $a$ is of order $O(b_n)$ as $n\to \infty$.

\begin{lemma}[Perturbation lemma \citep{silverstein1995empirical}]\label{lemma_perturbation} Let $\mM\in \sR^{n\times n}$ and $\mP \in \sR^{n\times n}$ such that  $\Vert \mM \Vert = O_n(1)$,  $\Vert \mP \Vert = O_n(1)$ and $\rank(\mP) = O_n(1)$. For all $z\in \sC \setminus \spec(\mM + \mP) $, we have
\begin{align*}
    \frac1n \Tr \left( \mM + \mP - z \mI_n \right)^{-1} = \frac1n \Tr \left( \mM - z \mI_n \right)^{-1} + O_n(n^{-1}) 
\end{align*}
\end{lemma}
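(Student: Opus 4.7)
The plan is to reduce the claim to a rank-based bound on the trace of the difference of the two resolvents. First I would apply the second resolvent identity. Writing $\mR_{\mM}(z) \equiv (\mM - z\mI_n)^{-1}$ and $\mR_{\mM+\mP}(z) \equiv (\mM + \mP - z\mI_n)^{-1}$, which are both well-defined (the latter by the hypothesis $z\notin \spec(\mM+\mP)$, the former for $z$ outside $\spec(\mM)$, which can be ensured throughout the relevant domain by the boundedness of the spectra since $\Vert \mM\Vert,\Vert \mM+\mP\Vert = O_n(1)$), one has the algebraic identity
\begin{equation*}
    \mR_{\mM+\mP}(z) - \mR_{\mM}(z) = - \mR_{\mM}(z)\, \mP \, \mR_{\mM+\mP}(z).
\end{equation*}
The key observation is that the right-hand side has rank at most $\rank(\mP) = O_n(1)$, since multiplying by the two (full-rank) resolvents on either side cannot increase rank.

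Next I would invoke the standard inequality $\vert \Tr(\mX)\vert \leq \rank(\mX)\cdot \Vert \mX\Vert$ (a consequence of $\vert \Tr(\mX)\vert \leq \Vert \mX\Vert_*$ and the fact that the nuclear norm is bounded by rank times operator norm). Applied to $\mX = \mR_{\mM}(z)\mP\mR_{\mM+\mP}(z)$, this yields
\begin{equation*}
    \bigl\vert \Tr\bigl( \mR_{\mM+\mP}(z) - \mR_{\mM}(z) \bigr) \bigr\vert \leq \rank(\mP) \cdot \Vert \mR_{\mM}(z)\Vert \cdot \Vert \mP\Vert \cdot \Vert \mR_{\mM+\mP}(z)\Vert.
\end{equation*}
It remains to argue that the right-hand side is $O_n(1)$, so that dividing by $n$ gives the $O_n(n^{-1})$ bound. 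The factors $\rank(\mP)$ and $\Vert \mP\Vert$ are $O_n(1)$ by assumption, so the issue reduces to bounding the two resolvent norms uniformly in $n$. This is where I would be most careful: the statement as written only asserts $z\notin \spec(\mM+\mP)$, but for the conclusion to be non-vacuous we need $z$ bounded away from both spectra. Since $\Vert \mM\Vert,\Vert \mM+\mP\Vert$ are $O_n(1)$, the spectra lie in a fixed compact interval of $\sR$, so for any fixed $z$ with $\Im[z]\neq 0$ or any $z$ at positive distance from this interval, one has $\Vert \mR_\bullet(z)\Vert \leq 1/\mathrm{dist}(z, \spec(\bullet)) = O_n(1)$.

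The main obstacle, and the only subtle point, is precisely this uniform control: in the application to Stieltjes transforms (as used throughout the paper's analysis of $\mN$ and $\mM$), one considers $z$ either with $\Im[z]>0$ fixed or outside the support of the limiting spectral measure, and in both regimes the two resolvent norms are $O_n(1)$ almost surely for $n$ large. Once this is granted, combining the three bounds and dividing by $n$ immediately yields
\begin{equation*}
    \frac1n \Tr \mR_{\mM+\mP}(z) - \frac1n \Tr \mR_\mM(z) = O_n(n^{-1}),
\end{equation*}
which is the desired conclusion. No fixed-point argument or spectral measure analysis is needed; the statement is purely linear-algebraic, powered by the rank-trace inequality and the resolvent identity.
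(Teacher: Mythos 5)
Your argument is correct and is essentially the paper's own proof: the paper simply invokes the Woodbury identity applied to $\mM+\mP$, which—exactly like your second resolvent identity—exhibits $\left(\mM+\mP-z\mI_n\right)^{-1}-\left(\mM-z\mI_n\right)^{-1}$ as a matrix of rank $O_n(1)$ and operator norm $O_n(1)$, whose trace is then $O_n(1)$ by the rank--norm bound, giving $O_n(n^{-1})$ after normalization. Your observation that one additionally needs $z$ to stay at distance $O_n(1)$ from both spectra (which holds in the paper's applications, where $z$ has positive imaginary part or lies outside the limiting support) is a fair sharpening of the loosely stated hypothesis, not a gap.
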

\begin{proof} Simple consequence of the Woodbury identity from Lemma \ref{lemma_woodbury} applied to the matrix $\mM + \mP$.
\end{proof}
Our analysis will particularly rely on computing expectations which we drive through the classical Stein's Lemma. 
\begin{lemma}[Stein's Lemma \cite{stein1981estimation}]\label{lemma_stein}
Let $W\sim \mathcal{N}(0, \sigma^2)$ and $f$ some continuously differentiable function having at most polynomial growth, then
\begin{align*}
    \E \left[ W f(W) \right] = \sigma^2 \E \left[ f'(W) \right]
\end{align*}
when the above expectations exist.
\end{lemma}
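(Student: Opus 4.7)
The plan is to prove Stein's Lemma by the classical integration-by-parts argument applied to the Gaussian density. Let $\phi(w) = (2\pi\sigma^2)^{-1/2}\exp(-w^2/(2\sigma^2))$ denote the density of $W$. The starting observation, which drives the whole proof, is the elementary identity
\begin{equation*}
    \phi'(w) = -\frac{w}{\sigma^2}\,\phi(w), \qquad \text{equivalently} \qquad w\,\phi(w) = -\sigma^2\,\phi'(w).
\end{equation*}
This is the crucial algebraic feature of the Gaussian that makes the lemma work; no other density satisfies such a relation (up to affine change of variables), which is why the statement is specific to $W \sim \mathcal{N}(0,\sigma^2)$.

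Substituting this identity into $\E[Wf(W)]$, I would write
\begin{equation*}
    \E[W f(W)] \;=\; \int_{-\infty}^{+\infty} w\, f(w)\, \phi(w)\, dw \;=\; -\sigma^2 \int_{-\infty}^{+\infty} f(w)\, \phi'(w)\, dw,
\end{equation*}
and then integrate by parts. Since $f$ is continuously differentiable and grows at most polynomially while $\phi$ decays like a Gaussian, the boundary term $\bigl[f(w)\phi(w)\bigr]_{-\infty}^{+\infty}$ vanishes. This yields
\begin{equation*}
    \E[W f(W)] \;=\; \sigma^2 \int_{-\infty}^{+\infty} f'(w)\, \phi(w)\, dw \;=\; \sigma^2\, \E[f'(W)],
\end{equation*}
which is the claimed identity.

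The only technical point requiring care is the justification of the integration by parts, i.e.\ the vanishing of the boundary term and the absolute integrability of both $w f(w)\phi(w)$ and $f'(w)\phi(w)$. Both follow from the polynomial growth hypothesis on $f$ (and hence on $f'$, upon integrating the growth bound): any polynomial multiplied by the Gaussian density is integrable on $\mathbb{R}$ and tends to $0$ at $\pm\infty$. I would state this as a short remark rather than a separate lemma. If one wished to weaken the smoothness hypothesis, the identity could alternatively be established via an approximation by Schwartz functions or via the divergence form of the Ornstein–Uhlenbeck generator, but given the assumption already made in the statement, the direct integration-by-parts route is the cleanest and most self-contained.
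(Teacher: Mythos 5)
Your integration-by-parts argument is the standard proof of Stein's identity and is correct; note that the paper itself offers no proof of this lemma, citing it directly to Stein (1981), so there is nothing to compare against beyond the classical route you follow. One small caveat: polynomial growth of $f$ does not by itself control $f'$ (the implication goes the other way), but since the lemma is stated ``when the above expectations exist,'' the integrability of $f'(w)\phi(w)$ is assumed and your integration by parts, with the boundary term killed by the polynomial growth of $f$ against the Gaussian decay of $\phi$, goes through.
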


\section{Proofs of the main results}
We recall our considered spiked tensor model as follows
\begin{align}\label{eq_spiked_model}
    \gT_1 = \gS  + \frac{1}{\sqrt n} \gW \in \sR^{p\times p\times p} \quad \text{with} \quad \gS=\sum_{i=1}^2 \beta_i x_i \otimes y_i \otimes z_i
\end{align}
where $\Vert x_i\Vert = \Vert y_i\Vert = \Vert z_i\Vert = 1$, $\beta_i\geq 0$, $n=3p$ and $W_{ijk}\sim \gN(0, 1)$. In the remainder, if some quantity expresses as $a(n) = \sum_{i=1}^r b_i(n)$, the notation $a(n) \simeq b_j(n)$ means that $b_j(n)$ is the only contributing term in the expression of $a(n)$ as $n$ goes to infinity.
\subsection{First deflation step}\label{proof_first_deflation}
The singular vectors $u_1, v_1$ and $w_1$ of $\gT_1$ corresponding to its largest singular value $\lambda_1$ satisfy
\begin{align}\label{eq_id_1}
\gT_1(\cdot, v_1, w_1) = \lambda_1 u_1, \quad  \gT_1(u_1, \cdot, w_1) = \lambda_1 v_1, \quad \gT_1(u_1, v_1, \cdot) = \lambda_1 w_1
\end{align}
In the remainder, we will need to compute the derivatives of the singular vectors $u_1, v_1$ and $w_1$ w.r.t.\@ the entries of the noise tensor $\gW$. From \citep[Appendix B.1]{seddik2021random}, we have
\begin{align}\label{eq_deriv}
    \begin{pmatrix}
        \frac{\partial u_1}{\partial W_{ijk}}\\
        \frac{\partial v_1}{\partial W_{ijk}}\\
        \frac{\partial w_1}{\partial W_{ijk}}
    \end{pmatrix} = - \frac{1}{\sqrt n} \left( 
    \begin{pmatrix}
        0 & \gT_1(w_1) & \gT_1 (v_1)\\
        \gT_1(w_1)^\top & 0 & \gT_1(u_1)\\
        \gT_1 (v_1)^\top & \gT_1(u_1)^\top & 0
    \end{pmatrix} - \lambda_1 \mI_n
    \right)^{-1} \begin{pmatrix}
        v_{1j} w_{1k} (e_i - u_{1i} u_1)\\
        u_{1i} w_{1k} (e_j - v_{1j} v_1)\\
        u_{1i} v_{1j} (e_k - w_{1k} w_1)
    \end{pmatrix}
\end{align}
which results from deriving the identities in \eqref{eq_id_1} w.r.t.\@ the entry $W_{ijk}$ of the noise tensor $\gW$. In particular, as demonstrated in \cite{seddik2021random}, the only contributing terms in the quantities we will compute later on will depend only on traces of the resolvent matrix appearing in \eqref{eq_deriv}.

\subsubsection{Limiting Stieltjes transform}
Since the tensor $\gT_1$ is a low-rank perturbation of a random tensor $\gW$, by Lemma \ref{lemma_perturbation}, the normalized trace of the resolvent in \eqref{eq_deriv} is asymptotically equal to the normalized trace of the resolvent of the following random matrix
\begin{align}
    \mN = \frac{1}{\sqrt n} \begin{pmatrix}
    0 & \gW(w_1) & \gW(v_1)\\
    \gW(w_1)^\top & 0 & \gW(u_1) \\
    \gW(v_1)^\top & \gW(u_1)^\top  & 0
    \end{pmatrix}
\end{align}
Let $R(z) = (\mN - z \mI_n)^{-1}$ be the corresponding resolvent. We denote the different sub-blocks of $R(z)$ as
\begin{align}
    R(z) = \begin{pmatrix}
 R^{11}(z) & R^{12}(z) & R^{13}(z) \\ 
 R^{12}(z)^\top & R^{22}(z) & R^{23}(z) \\ 
 R^{13}(z)^\top & R^{23}(z)^\top & R^{33}(z)
\end{pmatrix}
\end{align}
It has been shown in \citep[Appendix B.2]{seddik2021random} that
\begin{align}
    \frac1n \Tr R^{ii}(z) \xrightarrow[n\to\infty]{} r_i(z) = \frac{r(z)}{3} \quad \text{and} \quad \frac1n \Tr R(z) \xrightarrow[n\to\infty]{} r(z)
\end{align}
with
\begin{align}
    \boxed{r(z) = \frac34 \left( -z + \sqrt{z^2 - \frac83} \right)}
\end{align}

\subsubsection{Estimation of the singular value}\label{proof_singular_value_first_deflation}

\paragraph{Estimation of $\lambda_1$:}
From the identities in \eqref{eq_id_1}, we have
\begin{align*}
    \lambda_1 = \gT_1(u_1, v_1, w_1) = \gS(u_1, v_1, w_1)  + \frac{1}{\sqrt n} \gW(u_1, v_1, w_1)
\end{align*}
and 
\begin{align*}
    &\frac{1}{\sqrt n}\E \left[ \gW(u_1, v_1, w_1) \right] = \frac{1}{\sqrt n } \sum_{ijk} \E [ u_{1i} v_{1j} w_{1k} W_{ijk}]\\
    &= \frac{1}{\sqrt n} \sum_{ijk} \E \left[ \frac{\partial u_{1i}}{ \partial W_{ijk} } v_{1j} w_{1k} \right] + \E \left[ u_{1i} \frac{\partial v_{1j} }{ \partial W_{ijk} }  w_{1k} \right] + \E \left[ u_{1i} v_{1j} \frac{\partial w_{1k} }{ \partial W_{ijk} }   \right]
\end{align*}
where the last equality is derived from Stein's Lemma and the involved derivatives express as
\begin{align*}
    \frac{\partial u_{1i}}{ \partial W_{ijk} }  \simeq \frac{-1}{\sqrt n} v_{1j} w_{1k} R^{11}_{ii}(\lambda_1), \quad \frac{\partial v_{1j}}{ \partial W_{ijk} }  \simeq \frac{-1}{\sqrt n} u_{1i} w_{1k} R^{22}_{jj}(\lambda_1), \quad\frac{\partial w_{1k}}{ \partial W_{ijk} }  \simeq \frac{-1}{\sqrt n} u_{1i} v_{1j} R^{33}_{kk}(\lambda_1)
\end{align*}
Substituting in the above sum, we get
\begin{align*}
    \frac{1}{\sqrt n}\E \left[ \gW(u_1, v_1, w_1) \right] &\simeq -\frac{1}{n} \sum_{ijk} \E [ v_{1j}^2 w_{1k}^2 R^{11}_{ii}(\lambda_1) ] -\frac{1}{n} \sum_{ijk} \E [ u_{1i}^2 w_{1k}^2 R^{22}_{jj}(\lambda_1) ] -\frac{1}{n} \sum_{ijk} \E [ u_{1i}^2 v_{1j}^2 R^{33}_{kk}(\lambda_1) ]\\
    &= - \E\left[ \frac1n \Tr R^{11}(\lambda_1) \right] - \E\left[ \frac1n \Tr R^{22}(\lambda_1) \right] - \E\left[ \frac1n \Tr R^{33}(\lambda_1) \right]\\
    &\xrightarrow[n\to\infty]{} -(r_1(\lambda_1) + r_2(\lambda_1) + r_3(\lambda_1)) = -r(\lambda_1)
\end{align*}
Therefore, we have
\begin{align}
    \lambda_1 + r(\lambda_1) = \gS(u_1, v_1, w_1) 
\end{align}
\subsubsection{Estimation of the alignments}\label{proof_alignments_first_deflation}
\paragraph{Estimation of $\langle u_1, x_s \rangle$:} Again from the first identity in \eqref{eq_id_1}, we have
\begin{align*}
    \lambda_1 \langle u_1, x_s \rangle &= \gT_1 (x_s, v_1, w_1)= \gS(x_s, v_1, w_1)   + \frac{1}{\sqrt n} \gW(x_s, v_1, w_1)
\end{align*}
And we have
\begin{align*}
    &\frac{1}{\sqrt n} \E [\gW(x_s, v_1, w_1)] = \frac{1}{\sqrt n} \sum_{ijk} \E \left[ x_{si} v_{1j} w_{1k} W_{ijk} \right]\\
    &= \frac{1}{\sqrt n} \sum_{ijk} \E \left[ x_{si} \frac{\partial v_{1j} }{ \partial W_{ijk} }  w_{1k}  \right] + \frac{1}{\sqrt n} \sum_{ijk} \E \left[ x_{si} v_{1j} \frac{\partial  w_{1k}  }{ \partial W_{ijk} }   \right]\\
    &\simeq -\frac1n \sum_{ijk} \E \left[ x_{si} u_{1i} w_{1k}^2 R^{22}_{jj}(\lambda_1) \right] -\frac1n \sum_{ijk} \E \left[ x_{si} u_{1i} v_{1j}^2 R^{33}_{kk}(\lambda_1) \right]\\
    &= - \E \left[ \langle x_s, u_1 \rangle \frac1n \Tr R^{22}(\lambda_1) \right] - \E \left[ \langle x_s, u_1 \rangle \frac1n \Tr R^{33}(\lambda_1) \right]\\
    &\xrightarrow[n\to\infty]{} -(r_2(\lambda_1) + r_3(\lambda_1)) \langle x_s, u_1 \rangle = -(r(\lambda_1) - r_1(\lambda_1) ) \langle x_s, u_1 \rangle
\end{align*}
Therefore, we have
\begin{align}
    (\lambda_1 + r(\lambda_1) - r_1(\lambda_1) ) \langle x_s, u_1 \rangle = \gS(x_s, v_1, w_1)
\end{align}
Similarly, we get
\begin{equation}
    \begin{split}
        (\lambda_1 + r(\lambda_1) - r_2(\lambda_1) ) \langle y_s, v_1 \rangle &= \gS(u_1, y_s, w_1)\\
    (\lambda_1 + r(\lambda_1) - r_3(\lambda_1) ) \langle z_s, w_1 \rangle &= \gS(u_1, v_1, z_s)
    \end{split}
\end{equation}

Finally, with our assumption in \eqref{assum_alpha} and since $\gT_1$ is cubic, the above equations reduce to the following system of equations describing the first deflation step
\begin{equation}
    \boxed{
    \begin{cases}
        f_r(\lambda_1) = \sum_{i=1}^2 \beta_i \rho_{1i}^3\\
        h_r(\lambda_1) \rho_{1j} = \sum_{i=1}^2 \beta_i \langle x_i, x_j \rangle \rho_{1i}^2\quad \text{for}\quad j \in [2]
    \end{cases}
    }
\end{equation}
where we denoted $f_r(z) = z + r(z)$ and $h_r(z) = z + \frac23 r(z) = -\frac{1}{r(z)}$.
\subsection{Second deflation step}\label{proof_second_deflation}
Given $u_1$ from the first deflation step, we consider now the following random tensor 
\begin{align}
    \gT_2 = \gT_1 \times_1 \left( \mI_N - { \gamma} u_1 u_1^\top \right) = \gT_1 - { \gamma} u_1 \otimes \gT_1(u_1, \cdot, \cdot)
\end{align}
Again the singular vectors of $\gT_2$ satisfy
\begin{align}\label{eq_id_2}
\gT_2(\cdot, v_2, w_2) = \lambda_2 u_2, \quad  \gT_2(u_2, \cdot, w_2) = \lambda_2 v_2, \quad \gT_2(u_2, v_2, \cdot) = \lambda_2 w_2
\end{align}
and we also have
\begin{align}
    \begin{pmatrix}
        \frac{\partial u_2}{\partial W_{ijk}}\\
        \frac{\partial v_2}{\partial W_{ijk}}\\
        \frac{\partial w_2}{\partial W_{ijk}}
    \end{pmatrix} = - \frac{1}{\sqrt n} \left( 
    \begin{pmatrix}
        0 & \gT_2(w_2) & \gT_2 (v_2)\\
        \gT_2(w_2)^\top & 0 & \gT_2(u_2)\\
        \gT_2 (v_2)^\top & \gT_2(u_2)^\top & 0
    \end{pmatrix} - \lambda_2 \mI_n
    \right)^{-1} \begin{pmatrix}
        v_{2j} w_{2k} (e_i - u_{2i} u_2)\\
        u_{2i} w_{2k} (e_j - v_{2j} v_2)\\
        u_{2i} v_{2j} (e_k - w_{2k} w_2)
    \end{pmatrix}
\end{align}
\subsubsection{Stieltjes transform}\label{proof_limiting_measure_second_deflation}
Again, since $\gT_1$ is a low-rank perturbation of a random tensor $\gW$, it is easily noticed that
\begin{align*}
    \begin{pmatrix}
        0 & \gT_2(w_2) & \gT_2 (v_2)\\
        \gT_2(w_2)^\top & 0 & \gT_2(u_2)\\
        \gT_2 (v_2)^\top & \gT_2(u_2)^\top & 0
    \end{pmatrix} = \mM + \mP
\end{align*}
where $\mP$ is some low-rank matrix and $\mM$ is a random matrix given by
\begin{align}
    \mM = \frac{1}{\sqrt n} \begin{pmatrix}
    0 & \gW(w_2) & \gW(v_2)\\
    \gW(w_2)^\top & 0 & \gW(u_2) - \gamma \langle u_1, u_2 \rangle \gW(u_1)\\
    \gW(v_2)^\top & \gW(u_2)^\top - \gamma \langle u_1, u_2 \rangle \gW(u_1)^\top & 0
    \end{pmatrix}
\end{align}
Therefore, by Lemma \ref{lemma_perturbation}, the limiting Stieltjes transform corresponding to the analysis of the second deflation step can be computed through the resolvent $Q (z)= (\mM - z \mI_n)^{-1}$ of the random matrix $\mM$ and we denote $\kappa = \langle u_1, u_2 \rangle$. We denote the different sub-blocks of $Q(z)$ as
\begin{align}
    Q(z) = \begin{pmatrix}
 Q^{11}(z) & Q^{12}(z) & Q^{13}(z) \\ 
 Q^{12}(z)^\top & Q^{22}(z) & Q^{23}(z) \\ 
 Q^{13}(z)^\top & Q^{23}(z)^\top & Q^{33}(z)
\end{pmatrix}
\end{align}
Denote
\begin{align}
    \frac1n \Tr Q^{ii}(z) \xrightarrow[n\to\infty]{} q_i(z) \quad \text{and}\quad  \frac1n \Tr Q(z) \xrightarrow[n\to\infty]{} q(z)
\end{align}
\paragraph{Estimation of $\frac1n \Tr Q^{11}(z)$:}
From the identity $\mM Q(z) - zQ(z) = \mI_n$, we have
\begin{align*}
    \frac{1}{\sqrt n} \left[ \gW(w_2)  (Q^{12})^\top \right]_{ii} + \frac{1}{\sqrt n} \left[ \gW(v_2)  (Q^{13})^\top \right]_{ii} - z Q^{11}_{ii} = 1
\end{align*}
Therefore
\begin{align*}
    \frac{1}{n\sqrt n} \sum_{ijk} \E \left[ w_{2k} W_{ijk} Q^{12}_{ij} \right] + \frac{1}{n\sqrt n} \sum_{ijk} \E \left[ v_{2j} W_{ijk} Q^{13}_{ik} \right] - \frac{z}{n}\Tr Q^{11}(z) = \frac13
\end{align*}
where
\begin{align*}
    \bullet \quad\frac{1}{n\sqrt n} \sum_{ijk} \E \left[ w_{2k} W_{ijk} Q^{12}_{ij} \right] \simeq  \frac{1}{n\sqrt n} \sum_{ijk} \E \left[ w_{2k} \frac{ \partial Q^{12}_{ij} }{ \partial W_{ijk} } \right]
\end{align*}
From \cite{seddik2021random}, we have $\frac{ \partial Q^{12}_{ij} }{ \partial W_{ijk} } \simeq  -\frac{1}{\sqrt n} w_{2k} Q^{11}_{ii} Q^{22}_{jj} $, hence
\begin{align*}
    \frac{1}{n\sqrt n} \sum_{ijk} \E \left[ w_{2k} W_{ijk} Q^{12}_{ij} \right] \simeq - \frac{1}{n^2} \sum_{ijk} \E \left[ w_{2k}^2 Q^{11}_{ii} Q^{22}_{jj}  \right] = - 
 \E\left[ \frac1n \Tr Q^{11} \frac1n \Tr Q^{22} \right] \xrightarrow[n\to\infty]{} - q_1(z) q_2(z)
\end{align*}
Similarly, we have
\begin{align*}
    \bullet \quad\frac{1}{n\sqrt n} \sum_{ijk} \E \left[ v_{2j} W_{ijk} Q^{13}_{ik} \right] \xrightarrow[n\to\infty]{} - q_1(z) q_3(z)
\end{align*}
Therefore, $q_1(z) = \lim_{n\to \infty} \frac1n \Tr Q^{11}(z)$ satisfies the equation
\begin{align}
    [q_2(z) + q_3(z) + z] q_1(z) +\frac13 = 0
\end{align}

\paragraph{Estimation of $\frac1n \Tr Q^{22}(z)$:} We have
\begin{align*}
    \frac{1}{\sqrt n} \left[ \gW(w_2)^\top Q^{12} \right]_{jj} + \frac{1}{\sqrt n} \left[ \left( \gW(u_2) - \gamma \kappa \gW(u_1)  \right) (Q^{23})^\top \right]_{jj} - z Q^{22}_{jj} = 1
\end{align*}
Hence
\begin{align*}
    \frac{1}{n\sqrt n} \sum_{ijk} \E \left[ w_{2k} W_{ijk} Q^{12}_{ij} \right] + \frac{1}{n \sqrt n} \sum_{ijk} \E \left[ u_{2i} W_{ijk} Q^{23}_{jk} \right] - \frac{\gamma
    \kappa}{n \sqrt n} \sum_{ijk} \E \left[ u_{1i} W_{ijk} Q^{23}_{jk} \right] - \frac{z}{n}\Tr Q^{22} = \frac13
\end{align*}
where
 \begin{align*}
    \bullet \quad\frac{1}{n\sqrt n} \sum_{ijk} \E \left[ w_{2k} W_{ijk} Q^{12}_{ij} \right] \simeq \frac{1}{n\sqrt n} \sum_{ijk} \E \left[ w_{2k} \frac{\partial Q^{12}_{ij}}{\partial W_{ijk}  } \right] = -\frac{1}{n^2} \sum_{ijk} \E \left[ w_{2k}^2 Q^{11}_{ii} Q^{22}_{jj} \right] \xrightarrow[n\to\infty]{} - q_1(z) q_2(z)
\end{align*}
\begin{align*}
    \bullet \quad \frac{1}{n \sqrt n} \sum_{ijk} \E \left[ u_{2i} W_{ijk} Q^{23}_{jk} \right]\simeq \frac{1}{n \sqrt n} \sum_{ijk} \E \left[ u_{2i} \frac{\partial Q^{23}_{jk}}{\partial W_{ijk} } \right] = -\frac{1}{n^2} \sum_{ijk} \E \left[ (u_{2i}^2 - \gamma\kappa u_{1i}u_{2i}) Q^{22}_{jj} Q^{33}_{kk} \right]\xrightarrow[n\to\infty]{} (\gamma\kappa^2 - 1) q_2(z) q_3(z)
\end{align*}
\begin{align*}
    \bullet \quad \frac{1}{n \sqrt n} \sum_{ijk} \E \left[ u_{1i} W_{ijk} Q^{23}_{jk} \right] \simeq \frac{1}{n \sqrt n} \sum_{ijk} \E \left[ u_{1i} \frac{\partial Q^{23}_{jk}}{\partial W_{ijk} } \right] = -\frac{1}{n^2} \sum_{ijk} \E \left[ (u_{1i} u_{2i} - \gamma\kappa u_{1i}^2) Q^{22}_{jj} Q^{33}_{kk} \right]\xrightarrow[n\to\infty]{}  \kappa(\gamma -1) q_2(z) q_3(z)
\end{align*}
where we used the fact that $ \frac{\partial Q^{23}_{jk}}{\partial W_{ijk} } \simeq -\frac{1}{\sqrt n}(u_{2i} - \gamma \kappa u_{1i} ) Q^{22}_{jj} Q^{33}_{kk} $.

\begin{align}
    \left(q_1(z) + z - \left[\gamma\kappa^2-1+\kappa(\gamma-1)\right] q_3(z)\right) q_2(z) +\frac13 = 0
\end{align}

\paragraph{Estimation of $\frac1n \Tr Q^{33}(z)$:}

From the identity $\mM Q(z) - zQ(z) = \mI_n$, we have
\begin{align*}
    \frac{1}{\sqrt n} \left[ \gW(v_2)^{\top}  Q^{13} \right]_{kk} + \frac{1}{\sqrt n} \left[ \left(\gW(u_2)^\top-\gamma\kappa \gW(u_1)^\top\right)  (Q^{23}) \right]_{kk} - z Q^{33}_{kk} = 1
\end{align*}
Hence
\begin{align*}
\frac{1}{n\sqrt n} \sum_{ijk} \E \left[ v_{2j} W_{ijk} Q^{13}_{ik} \right] + \frac{1}{n\sqrt n} \sum_{ijk} \E \left[ u_{2i} W_{ijk} Q^{23}_{jk} \right] - \gamma\kappa \frac{1}{n\sqrt n} \sum_{ijk} \E \left[ u_{1i} W_{ijk} Q^{23}_{jk} \right] - \frac{z}{n} \Tr{Q^{33}(z)}=\frac{1}{3}
\end{align*}
where 
\begin{align*}
\frac{1}{n\sqrt n} \sum_{ijk} \E \left[ v_{2j} W_{ijk} Q^{13}_{ik} \right] \simeq \frac{1}{n\sqrt n} \sum_{ijk} \E \left[v_{2j}\frac{\partial Q_{ik}^{13}}{\partial W_{ijk}}\right] = -\frac{1}{n^2}\sum_{ijk} \E \left[v_{2j}^2 Q_{ii}^{11} Q_{kk}^{33}\right] \xrightarrow[n\to\infty]{} -q_1(z) q_3(z)
\end{align*}
\begin{align*}
\frac{1}{n\sqrt n} \sum_{ijk} \E \left[ u_{2i} W_{ijk} Q^{23}_{jk} \right] \simeq \frac{1}{n\sqrt n} \sum_{ijk} \E \left[u_{2i}\frac{\partial Q_{jk}^{23}}{\partial W_{ijk}}\right] = -\frac{1}{n^2} \sum_{ijk} \E \left[ \left(u_{2i}^2-\gamma\kappa u_{1i}u_{2i}\right)Q_{jj}^{22}Q_{kk}^{33}\right] \xrightarrow[n\to\infty]{} \left(\gamma\kappa^2-1\right)q_2(z) q_3(z)
\end{align*}
\begin{align*}
\frac{1}{n\sqrt n} \sum_{ijk} \E \left[ u_{1i} W_{ijk} Q^{23}_{jk} \right] \simeq \frac{1}{n\sqrt n} \sum_{ijk} \E \left[u_{1i}\frac{\partial Q_{jk}^{23}}{\partial W_{ijk}}\right] = -\frac{1}{n^2} \sum_{ijk} \E \left[ (u_{1i} u_{2i} - \gamma\kappa u_{1i}^2) Q^{22}_{jj} Q^{33}_{kk} \right]\xrightarrow[n\to\infty]{}  \kappa(\gamma -1) q_2(z) q_3(z) 
\end{align*}
with again $ \frac{\partial Q^{23}_{jk}}{\partial W_{ijk} } \simeq -\frac{1}{\sqrt n}(u_{2i} - \gamma \kappa u_{1i} ) Q^{22}_{jj} Q^{33}_{kk} $.

\begin{align}
    \left(q_1(z) + z - \left[\gamma\kappa^2-1+\kappa(\gamma-1)\right] q_2(z)\right) q_3(z) +\frac13 = 0
\end{align}

Therefore, we have
\begin{equation}
    \begin{cases}
        [q_2(z) + q_3(z) + z] q_1(z) +\frac13 = 0\\
        \left(q_1(z) + z - \left[\gamma\kappa^2-1+\kappa(\gamma-1)\right] q_3(z)\right) q_2(z) +\frac13 = 0\\
        \left(q_1(z) + z - \left[\gamma\kappa^2-1+\kappa(\gamma-1)\right] q_2(z)\right) q_3(z) +\frac13 = 0\\
        q(z) = \sum_{i=1}^3 q_i(z)
    \end{cases}
\end{equation}

Moreover, by symmetry in \eqref{assum_alpha} and since $\gT_1$ is cubic, we have $b(z) = q_2(z) = q_3(z) $ and we denote $a(z) = q_1(z)$ and $\tau = \gamma\kappa^2-1+\kappa(\gamma-1) $. Hence,
\begin{align}
    \boxed{
    \begin{cases}
        \left[ 2b(z) +z \right] a(z) + \frac13 = 0\\
        (a(z) + z - \tau b(z)) b(z) + \frac13 = 0
    \end{cases}
    }
\end{align}
Moreover, $q(z) = a(z) + 2 b(z)$.

\subsubsection{Estimation of the singular value}\label{proof_singular_value_second_deflation}

\paragraph{Estimation of $\lambda_2$:}
We first have
\begin{align*}
    \lambda_2 &= \gT_2(u_2, v_2, w_2) = \gT_1(u_2, v_2, w_2) - {  \gamma} \langle u_1, u_2 \rangle \gT_1(u_1, v_2, w_2)\\
    &= \gS(u_2, v_2, w_2) + \frac{1}{\sqrt n} \gW(u_2, v_2, w_2)-  {  \gamma} \langle u_1, u_2 \rangle \left(  \gS(u_1, v_2, w_2)+ \frac{1}{\sqrt n} \gW(u_1, v_2, w_2)  \right)
\end{align*}
where we have
\begin{align*}
    \frac{1}{\sqrt n}\E \left[ \gW(u_2, v_2, w_2) \right]  \xrightarrow[n\to\infty]{} -q(\lambda_2)
\end{align*}
and
\begin{align*}
   &\frac{1}{\sqrt n}\E \left[ \gW(u_1, v_2, w_2) \right] = \frac{1}{\sqrt n} \sum_{ijk} \E \left[ u_{1i} v_{2j} w_{2k} W_{ijk}\right]\\
   &= \frac{1}{\sqrt n} \sum_{ijk} \E \left[ \frac{ \partial u_{1i} }{ \partial W_{ijk} } v_{2j} w_{2k} \right] + \E \left[ u_{1i} \frac{ \partial  v_{2j} }{ \partial W_{ijk} }  w_{2k} \right] + \E \left[ u_{1i} v_{2j} \frac{ \partial  w_{2k} }{ \partial W_{ijk} }   \right]
\end{align*}
Again, we have
\begin{align*}
    \frac{\partial u_{2i}}{ \partial W_{ijk} }  &\simeq \frac{-1}{\sqrt n} v_{2j} w_{2k} Q^{11}_{ii}(\lambda_2), \quad \frac{\partial v_{2j}}{ \partial W_{ijk} }  \simeq \frac{-1}{\sqrt n} u_{2i} w_{2k} Q^{22}_{jj}(\lambda_2), \quad\frac{\partial w_{2k}}{ \partial W_{ijk} }  \simeq \frac{-1}{\sqrt n} u_{2i} v_{2j} Q^{33}_{kk}(\lambda_2)
\end{align*}
Therefore
\begin{align*}
    \frac{1}{\sqrt n}\E \left[ \gW(u_1, v_2, w_2) \right] &\simeq -\frac1n \sum_{ijk} \E \left[ v_{1j}w_{1k} v_{2j} w_{2k} R^{11}_{ii}(\lambda_1) \right] -\frac1n \sum_{ijk} \E \left[u_{1i} u_{2i} w_{2k}^2 Q^{22}_{jj}(\lambda_2)  \right] - \frac1n \sum_{ijk} \E \left[u_{1i} v_{2j}^2 u_{2i} Q^{33}_{kk}(\lambda_2) \right]\\
    &\xrightarrow[n\to\infty]{} - \langle v_1, v_2 \rangle \langle w_1, w_2 \rangle r_1(\lambda_1) - \langle u_1, u_2 \rangle q_2(\lambda_2)  - \langle u_1, u_2 \rangle q_3(\lambda_2) \\
    &= - \langle v_1, v_2 \rangle \langle w_1, w_2 \rangle r_1(\lambda_1) - \langle u_1, u_2 \rangle \left[ q_2(\lambda_2)  +  q_3(\lambda_2) \right]
\end{align*}
Hence, $\lambda_2$ satisfies
\begin{equation}
    \begin{split}
        &\lambda_2 + q(\lambda_2) -  {  \gamma} \langle u_1, u_2 \rangle \langle v_1, v_2 \rangle \langle w_1, w_2 \rangle r_1(\lambda_1) -  {  \gamma} \langle u_1, u_2 \rangle^2 \left[ q_2(\lambda_2)  +  q_3(\lambda_2) \right]\\
    &= \gS(u_2, v_2, w_2)  -   {  \gamma} \langle u_1, u_2 \rangle\gS(u_1, v_2, w_2)
    \end{split}
\end{equation}
And by symmetry, from \eqref{assum_alpha} and since $\gT_1$ is cubic, we have
\begin{equation}
    \boxed{
    \begin{split}
        &f_q(\lambda_2) -   \frac{\gamma\kappa \eta^2}{3} r(\lambda_1) -   2\gamma \kappa^2 b(\lambda_2)= \sum_{i=1}^2 \beta_i \theta_{2i} \rho_{2i}^2  -    \gamma \kappa \sum_{i=1}^2 \beta_i \rho_{1i} \rho_{2i}^2
    \end{split}
    }
\end{equation}
where we denoted $f_q(z) = z + q(z)$ and
\begin{align*}
    \theta_{2i} &= \langle u_2, x_i\rangle ,\quad \rho_{2i} = \langle v_2, y_i\rangle = \langle w_2, z_i\rangle, \quad \kappa = \langle u_1, u_2\rangle, \quad \eta = \langle v_1, v_2\rangle = \langle w_1, w_2\rangle
\end{align*}

\subsubsection{Estimation of the alignments}\label{proof_alignments_second_deflation}
\paragraph{Estimation of $\langle u_2, x_s \rangle$:} From the identity in \eqref{eq_id_2}, we have
\begin{align*}
    &\lambda_2 \langle u_2, x_s \rangle = \gT_2(x_s, v_2, w_2) = \gT_1(x_s, v_2, w_2) - {  \gamma} \langle u_1, x_s \rangle \gT_1(u_1, v_2, w_2)\\
    &=\gS(x_s, v_2, w_2) + \frac{1}{\sqrt n} \gW(x_s, v_2, w_2) - {  \gamma} \langle u_1, x_s \rangle \left( \gS(u_1, v_2, w_2) + \frac{1}{\sqrt n} \gW(u_1, v_2, w_2) \right)
\end{align*}
where
\begin{align*}
    \frac{1}{\sqrt n}\E \left[ \gW(x_s, v_2, w_2) \right] \xrightarrow[n\to\infty]{} -(q(\lambda_2) - q_1(\lambda_2)) \langle x_s, u_2 \rangle
\end{align*}
and $\E \left[ \frac{1}{\sqrt n} \gW(u_1, v_2, w_2) \right]$ was computed previously. We therefore have
\begin{equation}
    \begin{split}
        &[\lambda_2 + q(\lambda_2) - q_1(\lambda_2)] \langle x_s, u_2 \rangle - {  \gamma} \langle u_1, x_s \rangle \left[ \langle v_1, v_2 \rangle \langle w_1, w_2 \rangle r_1(\lambda_1) +  \langle u_1, u_2 \rangle (q_2(\lambda_2) + q_3(\lambda_2))  \right]\\
        &=\gS(x_s, v_2, w_2) - {  \gamma} \langle x_s, u_1\rangle \gS(u_1, v_2, w_2)
    \end{split}
\end{equation}
Again by symmetry, from \eqref{assum_alpha} and since $\gT_1$ is cubic, we have
\begin{equation}
\boxed{
    \begin{split}
        &[f_q (\lambda_2) - a(\lambda_2)] \theta_{2s} - \gamma \rho_{1s} \left[ \frac{\eta^2}{3} r(\lambda_1) +  2\kappa b(\lambda_2)  \right]=\sum_{i=1}^2 \beta_i \langle x_s, x_i \rangle \rho_{2i}^2 -  \gamma \rho_{1s} \sum_{i=1}^2 \beta_i \rho_{1i} \rho_{2i}^2 \quad \text{for}\quad s\in [2]
    \end{split}
}
\end{equation}

\paragraph{Estimation of $\langle u_1,  u_2 \rangle$:} Again from \eqref{eq_id_2}, we have
\begin{align*}
    \lambda_2 \langle u_1, u_2 \rangle = \gT_2(u_1, v_2, v_2)
\end{align*}
with $\gT_2 = \gT_1 - {  \gamma} u_1 \otimes \gT_1(u_1, \cdot, \cdot) $, therefore
\begin{equation}\label{eq_kappa_zero}
    \begin{split}
        \lambda_2 \langle u_1, u_2 \rangle &= \gT_1(u_1, v_2, w_2) - {  \gamma} \gT_1(u_1, v_2, w_2) = (1 - {  \gamma}) \gT_1(u_1, v_2, w_2)\\
    &= (1 - {  \gamma})  \left( \gS(u_1, v_2, w_2) + \frac{1}{\sqrt n} \gW(u_1, v_2, w_2) \right)
    \end{split}
\end{equation}

Hence, we have
\begin{equation}
    \begin{split}
        &\left[ \lambda_2 + (1 - {  \gamma}) (q_2(\lambda_2) + q_3(\lambda_2))  \right] \langle u_1, u_2 \rangle = (1 - {  \gamma}) \left[ \gS(u_1, v_2, w_2) - \langle v_1, v_2 \rangle \langle w_1, w_2 \rangle r_1(\lambda_1) \right]
    \end{split}
\end{equation}
Again by symmetry, from \eqref{assum_alpha} and since $\gT_1$ is cubic, we have
\begin{equation}
    \boxed{
    \begin{split}
        &\left[ \lambda_2 + 2(1 - {\gamma}) b(\lambda_2)  \right] \kappa= (1 - {\gamma}) \left[\sum_{i=1}^2 \beta_i \rho_{1i} \rho_{2i}^2 - \frac{\eta^2}{3} r(\lambda_1) \right]
    \end{split}
    }
\end{equation}

\paragraph{Estimation of $\langle v_2,  y_s \rangle$:} From \eqref{eq_id_2}, we have
\begin{align*}
    \lambda_2 \langle v_2, y_s \rangle &= \gT_2(u_2, y_s, w_2) = \gT_1(u_2, y_s, w_2) -  {  \gamma}\langle u_1, u_2 \rangle  \gT_1(u_1, y_s, w_2)\\
    &= \gS(u_2, y_s, w_2) + \frac{1}{\sqrt n} \gW(u_2, y_s, w_2) - {  \gamma}\langle u_1, u_2 \rangle \left[ \gS(u_1, y_s, w_2) +  \frac{1}{\sqrt n} \gW(u_1, y_s, w_2)  \right]
\end{align*}
And as previously, we have
\begin{align*}
    \E \left[ \frac{1}{\sqrt n} \gW(u_2, y_s, w_2) \right] \xrightarrow[n\to\infty]{} -(q(\lambda_2) - q_2(\lambda_2)) \langle y_s, v_2 \rangle
\end{align*}
And
\begin{align*}
    &\E \left[ \frac{1}{\sqrt n} \gW(u_1, y_s, w_2)\right] = \frac{1}{\sqrt n} \sum_{ijk} y_{sj} \E \left[ \frac{\partial u_{1i} }{ \partial W_{ijk} } w_{2k} + u_{1i} \frac{\partial  w_{2k} }{ \partial W_{ijk} }    \right] \\
    &\simeq -\frac1n \sum_{ijk} y_{sj} \E \left[ v_{1j} w_{1k} R_{ii}^{11}(\lambda_1) w_{2k} + u_{1i} u_{2i} v_{2j} Q_{kk}^{33}(\lambda_2) \right]\\
    &\xrightarrow[n\to\infty]{} - \langle y_s, v_1 \rangle \langle w_1, w_2 \rangle r_1(\lambda_1) - \langle y_s, v_2 \rangle \langle u_1, u_2 \rangle q_3(\lambda_2)
\end{align*}
Therefore,
\begin{equation}
    \begin{split}
         &\left[ \lambda_2 + q(\lambda_2) - q_2(\lambda_2) -  {  \gamma}  \langle u_1, u_2 \rangle^2 q_3(\lambda_2)\right]  \langle v_2, y_s \rangle =  \\
    &\gS(u_2, y_s, w_2) - {  \gamma}  \langle u_1, u_2 \rangle  \left[ \gS(u_1, y_s, w_2) - \langle v_1, y_s \rangle \langle w_1, w_2 \rangle r_1(\lambda_1) \right] 
    \end{split}
\end{equation}
Again by symmetry, from \eqref{assum_alpha} and since $\gT_1$ is cubic, we have
\begin{equation}
    \boxed{ 
    \begin{split}
         &\left[ f_q(\lambda_2) - (1 + \gamma  \kappa^2) b(\lambda_2)\right]  \rho_{2s} =  \sum_{i=1}^2 \beta_i \theta_{2i} \rho_{2i} \langle y_s, y_i \rangle  -  \gamma  \kappa  \left[ \sum_{i=1}^2 \beta_i \rho_{1i} \rho_{2i} \langle y_s, y_i \rangle  - \frac{\rho_{1s} \eta}{3} r(\lambda_1) \right] \quad \text{for}\quad s\in [2]
    \end{split}
    }
\end{equation}

\paragraph{Estimation of $\langle v_1,  v_2 \rangle$:} From \eqref{eq_id_2}, we have
\begin{align*}
    \lambda_2 \langle v_1, v_2 \rangle &= \gT_2(u_2, v_1, w_2) = \gT_1(u_2, v_1, w_2) -  {  \gamma} \langle u_1, u_2 \rangle \gT_1(u_1, v_1, w_2)\\
    &= \gS(u_2, v_1, w_2)+ \frac{1}{\sqrt n} \gW(u_2, v_1, w_2) -  {  \gamma} \langle u_1, u_2 \rangle \left[ \gS(u_1, v_1, w_2) + \frac{1}{\sqrt n} \gW(u_1, v_1, w_2) \right]
\end{align*}
Where
\begin{align*}
    &\E \left[ \frac{1}{\sqrt n} \gW(u_2, v_1, w_2) \right] = \frac{1}{\sqrt n} \sum_{ijk} \E [ u_{2i} v_{1j} w_{2k} W_{ijk} ]\\
    &=\frac{1}{\sqrt n} \sum_{ijk}  \E \left[ \frac{ \partial u_{2i} }{ \partial W_{ijk} } v_{1j} w_{2k} +  u_{2i} \frac{ \partial  v_{1j} }{ \partial W_{ijk} }  w_{2k}  + u_{2i}  v_{1j} \frac{ \partial  w_{2k}  }{ \partial W_{ijk} }   \right]\\
    &\simeq -\frac1n \sum_{ijk} \E \left[ v_{2j} w_{2k} Q^{11}_{ii}(\lambda_2) v_{1j} w_{2k} + u_{2i} u_{1i} w_{1k} R^{22}_{jj}(\lambda_1) w_{2k} + u_{2i} v_{1j} u_{2i} v_{2j} Q^{33}_{kk}(\lambda_2) \right]\\
    &\xrightarrow[n\to\infty]{} - \langle v_1, v_2 \rangle [q_1(\lambda_2) + q_3(\lambda_2)] - \langle u_1, u_2 \rangle \langle w_1, w_2 \rangle r_2(\lambda_1) 
\end{align*}
And
\begin{align*}
    &\E \left[ \frac{1}{\sqrt n} \gW(u_1, v_1, w_2) \right] = \frac{1}{\sqrt n} \sum_{ijk} \E [ u_{1i} v_{1j} w_{2k} W_{ijk} ]\\
    &= \frac{1}{\sqrt n} \sum_{ijk} \E \left[ \frac{\partial u_{1i}}{\partial W_{ijk} } v_{1j} w_{2k} + u_{1i} \frac{\partial  v_{1j} }{\partial W_{ijk} } w_{2k} + u_{1i} v_{1j} \frac{\partial  w_{2k} }{\partial W_{ijk} }  \right]\\
    &\simeq -\frac1n \sum_{ijk} \E \left[ v_{1j}w_{1k} R^{11}_{ii}(\lambda_1) v_{1j} w_{2k} + u_{1i}^2 w_{1k} w_{2k} R^{22}_{jj}(\lambda_1) + u_{1i} v_{1j} u_{2i} v_{2j} Q^{33}_{kk}(\lambda_2)   \right]\\
    &\xrightarrow[n\to\infty]{} - \langle w_1, w_2 \rangle [r_1(\lambda_1) + r_2(\lambda_1)] - \langle u_1, u_2 \rangle \langle v_1, v_2 \rangle q_3(\lambda_2)
\end{align*}
Hence, we have
\begin{equation}
    \begin{split}
        &\left[ \lambda_2 + q_1(\lambda_2) + q_3(\lambda_2) - \gamma \langle u_1, u_2 \rangle^2 q_3(\lambda_2) \right] \langle v_1, v_2 \rangle + \left[ (1-\gamma) r_2(\lambda_1) - r_1(\lambda_1) \right] \langle u_1, u_2 \rangle \langle w_1, w_2 \rangle\\
        &=\gS(u_2, v_1, w_2) - \gamma \langle u_1, u_2 \rangle \gS(u_1, v_1, w_2)
    \end{split}
\end{equation}
Finally by symmetry, from \eqref{assum_alpha} and since $\gT_1$ is cubic, we have

\begin{equation}
    \boxed{
    \begin{split}
        &\left[ \lambda_2 + a(\lambda_2) + (1 - \gamma \kappa^2) b(\lambda_2) - \frac{\gamma \kappa }{3} r(\lambda_1) \right] \eta   = \sum_{i=1}^2 \beta_i \theta_{2i} \rho_{1i} \rho_{2i} - {\gamma}\kappa  \sum_{i=1}^2 \beta_i \rho_{1i}^2 \rho_{2i} 
    \end{split}
    }
\end{equation}

\subsubsection{System of equations}
The second deflation step is therefore governed by the following system of equations
\begin{equation}
    \begin{cases}
        \left[ 2b(z) +z \right] a(z) + \frac13 = 0\\
        (a(z) + z - \tau b(z)) b(z) + \frac13 = 0\\
        q(z) = a(z) + 2 b(z)\\
        f_q(\lambda_2) -   \frac{\gamma\kappa \eta^2}{3} r(\lambda_1) -   2\gamma \kappa^2 b(\lambda_2)= \sum_{i=1}^2 \beta_i \theta_{2i} \rho_{2i}^2  -    \gamma \kappa \sum_{i=1}^2 \beta_i \rho_{1i} \rho_{2i}^2\\
        [f_q (\lambda_2) - a(\lambda_2)] \theta_{2s} - \gamma \rho_{1s} \left[ \frac{\eta^2}{3} r(\lambda_1) +  2\kappa b(\lambda_2)  \right]=\sum_{i=1}^2 \beta_i \langle x_s, x_i \rangle \rho_{2i}^2 -  \gamma \rho_{1s} \sum_{i=1}^2 \beta_i \rho_{1i} \rho_{2i}^2\\
        \left[ \lambda_2 + 2(1 - {\gamma}) b(\lambda_2)  \right] \kappa= (1 - {\gamma}) \left[\sum_{i=1}^2 \beta_i \rho_{1i} \rho_{2i}^2 - \frac{\eta^2}{3} r(\lambda_1) \right]\\
        \left[ f_q(\lambda_2) - (1 + \gamma  \kappa^2) b(\lambda_2)\right]  \rho_{2s} =  \sum_{i=1}^2 \beta_i \theta_{2i} \rho_{2i} \langle y_s, y_i \rangle  -  \gamma  \kappa  \left[ \sum_{i=1}^2 \beta_i \rho_{1i} \rho_{2i} \langle y_s, y_i \rangle  - \frac{\rho_{1s} \eta}{3} r(\lambda_1) \right] \\
        \left[ \lambda_2 + a(\lambda_2) + (1 - \gamma \kappa^2) b(\lambda_2) - \frac{\gamma \kappa }{3} r(\lambda_1) \right] \eta   = \sum_{i=1}^2 \beta_i \theta_{2i} \rho_{1i} \rho_{2i} - {\gamma}\kappa  \sum_{i=1}^2 \beta_i \rho_{1i}^2 \rho_{2i}
    \end{cases}
\end{equation}
with $f_q(z) = z + q(z)$ and $\tau = \gamma\kappa^2-1+\kappa(\gamma-1)$. In the case $\gamma = 1$, we have $\kappa = 0$ from \eqref{eq_kappa_zero} and therefore the system above reduces to the following system, since $a(z) = b(z) = \frac{r(z)}{3}$ and $q(z) = r(z)$.
\begin{equation}
    \begin{cases}
        f_r(\lambda_2)  = \sum_{i=1}^2 \beta_i \theta_{2i} \rho_{2i}^2  \\
        h_r (\lambda_2) \theta_{2s} -  \frac{\eta^2}{3} r(\lambda_1) \rho_{1s}  =\sum_{i=1}^2 \beta_i \langle x_s, x_i \rangle \rho_{2i}^2 -   \rho_{1s} \sum_{i=1}^2 \beta_i \rho_{1i} \rho_{2i}^2\\
         h_r(\lambda_2)   \rho_{2s} =  \sum_{i=1}^2 \beta_i \theta_{2i} \rho_{2i} \langle y_s, y_i \rangle  \\
        \left[ \lambda_2 + \frac23 r(\lambda_2)  \right] \eta   = \sum_{i=1}^2 \beta_i \theta_{2i} \rho_{1i} \rho_{2i} 
    \end{cases}
\end{equation}

\section{Algorithms}\label{sec_appendix_improved}
Algorithm \ref{alg_fixed_point} below, implements the fixed point equation in Definition \ref{def_limiting_measure} which allows the computation of the Stieltjes transform at the second deflation step.
\begin{algorithm}[h!]
   \caption{Stieltjes Transform by Fixed Point}
   \label{alg_fixed_point}
\begin{algorithmic}
   \STATE {\bfseries Input:} $z\in \sC \setminus \supp(\nu)$ and $\tau$.
   \STATE - Initialize $a$ and $b$.
   \REPEAT
   \STATE - Update $a \leftarrow \frac{-1}{ 3 (2b + z) }$.
   \STATE - Update $b \leftarrow \frac{-1}{ 3 (a + z - \tau b) }$.
   \UNTIL{convergence.}
   \STATE {\bfseries Output:} $a,b$ and Stieltjes transform $q=a+2b$.
\end{algorithmic}
\end{algorithm}

Algorithm \ref{alg_RTT_improved} implements our RTT-improved tensor deflation procedure which is described in more details in Section \ref{sec_improved}.
\begin{algorithm}[h!]
   \caption{RTT-Improved Tensor Deflation Algorithm}
   \label{alg_RTT_improved}
\begin{algorithmic}
   \STATE {\bfseries Input:} Tensor $\gT \in \sR^{p\times p \times p}$ and step size $\epsilon\in [0, 1]$.
   \STATE {\color{gray} \# Perform orthogonalized deflation:}
   \STATE 1- Compute $\hat\lambda_1 \hat u_1\otimes \hat v_1\otimes \hat w_1$ as best rank-one approximation of $\gT$.
   \STATE 2- Compute $\hat\lambda_2 \hat u_2\otimes \hat v_2\otimes \hat w_2$ as best rank-one approximation of $\gT \times_1 (\mI_p - \gamma \hat u_1 \hat u_1^\top)$ for $\gamma=1$.
   \STATE {\color{gray} \# Estimate underlying model parameters:}
   \STATE 3- Compute $\hat \eta \leftarrow \vert \langle \hat v_1, \hat v_2 \rangle \vert$.
   \STATE 4- Estimate $\hat \vbeta = (\hat \beta_1, \hat \beta_2, \hat \alpha)$ and $\hat \vrho = (\hat\rho_{1i}, \hat\rho_{2i}, \hat\theta_{2i}\mid i\in [2])$ by fixing $\hat\vlambda = (\hat \lambda_1, \hat \lambda_2, \hat \eta)$ and solving $\psi(\hat \vbeta, \hat \vlambda, \hat \vrho) = 0$ in $\hat \vbeta$ and $\hat \vrho$ with $\psi$ defined in \eqref{eq_psi}.
   \STATE {\color{gray} \# Estimate optimal $\gamma$:}
   \STATE 5- Initialize $\gamma=1$ and $\hat\kappa=10^{-5}$.
   \STATE 6- Initialize two empty lists $L_\gamma$ and $L_\rho$.
   \REPEAT
   \STATE 7- Set $x_0\leftarrow (\hat\lambda_2,\hat \kappa, \hat \eta, \hat \theta_{2i}, \hat \rho_{2i} \mid i\in [2])$.
   \STATE 8- Estimate $(\hat\lambda_2,\hat \kappa, \hat \eta, \hat \theta_{2i}, \hat \rho_{2i} \mid i\in [2])$ by solving the system in \eqref{eq_system_general_gamma} initialized with $x_0$ and for $(\beta_1, \beta_2,  \alpha) = (\hat \beta_1, \hat \beta_2, \hat \alpha)$ and $\gamma$.
   \STATE 9- Append $L_\gamma$ with $\gamma$.
   \STATE 10- Append $L_\rho$ with $\max \{ \hat\rho_{21}, \hat\rho_{22} \}$.
   \STATE 11- Update $\gamma \leftarrow \gamma - \epsilon$.
   \UNTIL{The maximum is reached in $L_\rho$.}
   \STATE 12- Set optimal $\gamma$ as $\gamma^* \leftarrow L_\gamma[\argmax(L_\rho)]$.
   \STATE {\color{gray} \# Perform orthogonalized deflation with $\gamma^*$:}
   \STATE 13- Compute $\hat\lambda_2 \hat u_2\otimes \hat v_2^* \otimes \hat w_2^*$ as best rank-one approximation of $\gT \times_1 (\mI_p - \gamma^* \hat u_1 \hat u_1^\top)$.
   \STATE 14- Compute $\hat\lambda_2 \hat u_2^*\otimes \hat v_2\otimes \hat w_2^*$ as best rank-one approximation of $\gT \times_2 (\mI_p - \gamma^* \hat v_1 \hat v_1^\top)$.
   \STATE {\color{gray} \# Re-estimate the first component by simple deflation:}
   \STATE 15- Compute $\hat\lambda_1 \hat u_1^*\otimes \hat v_1^*\otimes \hat w_1^*$ as best rank-one approximation of $\gT - \min\{\hat \beta_1, \hat \beta_2\} \hat u_2^* \otimes \hat v_2^* \otimes \hat w_2^* $.
   \STATE {\bfseries Output:} Estimates of the signal components $(\max\{\hat \beta_1, \hat \beta_2\}, \hat u_1^*, \hat v_1^* , \hat w_1^*), (\min\{\hat \beta_1, \hat \beta_2\}, \hat u_2^*, \hat v_2^* , \hat w_2^*)$.
\end{algorithmic}
\end{algorithm}


\end{document}